%

%
\PassOptionsToPackage{table, usenames, dvipsnames}{xcolor}
\documentclass[10pt,journal,compsoc]{IEEEtran}


%
\usepackage{dirtytalk}

%
\usepackage[style=numeric-comp, natbib=true, sorting=none, backend=biber, backref=true]{biblatex}
\addbibresource{references.bib}

%
\ifCLASSINFOpdf
\else
\fi

\usepackage{pgfplots}
\usepackage{pgfplotstable}

\usetikzlibrary{decorations.pathreplacing}
\usetikzlibrary{arrows.meta}
\usetikzlibrary{shapes.geometric}


\pgfplotsset{compat=1.16}

\usepackage[subrefformat=parens, labelformat=parens]{subcaption}

\usepackage{amsthm}
\usepackage{amsmath}
\usepackage{mathtools}
\usepackage{amsfonts}
\usepackage{bm}
\usepackage{bbm}

%
%
%

\usepackage{algorithm}
\usepackage{algpseudocode}

\usepackage{enumitem}

\usepackage[normalem]{ulem}

%



\usepackage{booktabs}
\usepackage{multirow}
\usepackage{tablefootnote}

\usepackage{thmtools, thm-restate}
\declaretheorem{theorem}

\newtheorem{remark}{Remark}
\newtheorem*{definition}{Definition}
\newtheorem{lemma}{Lemma}
\newtheorem{corollary}{Corollary}

\newenvironment{proofsketch} {\begin{proof}[Proof sketch]} {\end{proof}}

\newcommand\MYhyperrefoptions{bookmarks=true, bookmarksnumbered=true, pdfpagemode={UseOutlines}, plainpages=false, pdfpagelabels=true, colorlinks=true, linkcolor={BrickRed}, citecolor={MidnightBlue}, urlcolor={PineGreen}, pdftitle={PAC-Bayes meta-learning}, 
pdfsubject={meta-learning},
pdfauthor={Cuong Nguyen},
pdfkeywords={meta-learning, PAC-Bayes}}
\ifCLASSINFOpdf
\usepackage[\MYhyperrefoptions, pdftex, pagebackref=false]{hyperref}
\else
\usepackage[\MYhyperrefoptions, breaklinks=true, dvips, pagebackref=false]{hyperref}
\usepackage{breakurl}
\fi
\hyphenation{op-tical net-works semi-conduc-tor}

\newif\ifhighlight

\highlightfalse

\ifhighlight
    \newcommand{\red}[1]{\textcolor{BrickRed}{#1}}
    \newcommand{\green}[1]{\textcolor{PineGreen}{#1}}
    \newcommand{\orange}[1]{\textcolor{BurntOrange}{#1}}
    \newcommand{\blue}[1]{\textcolor{MidnightBlue}{#1}}
    \newcommand{\purple}[1]{\textcolor{RoyalPurple}{#1}}
    \newcommand{\magenta}[1]{\textcolor{Magenta}{#1}}
\else
    \newcommand{\red}[1]{#1}
    \newcommand{\green}[1]{#1}
    \newcommand{\orange}[1]{#1}
    \newcommand{\blue}[1]{#1}
    \newcommand{\purple}[1]{#1}
    \newcommand{\magenta}[1]{#1}
\fi

\begin{document}
%
\def\mytitle{PAC-Bayes Meta-learning with Implicit Task-specific Posteriors}
\title{\mytitle}
%
%
%
%

\author{Cuong~Nguyen\href{https://orcid.org/0000-0003-2672-6291}{\includegraphics[width=1em]{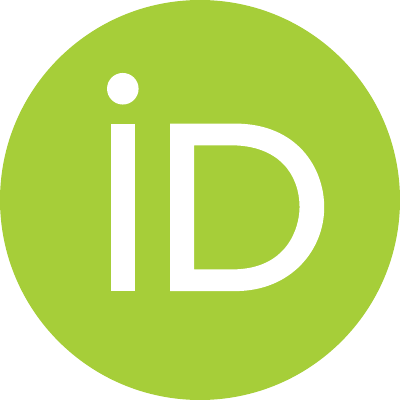}},
        Thanh-Toan~Do,
        and~Gustavo~Carneiro\href{https://orcid.org/0000-0002-5571-6220}{\includegraphics[width=1em]{img/orcid_icon.pdf}}
\IEEEcompsocitemizethanks{\IEEEcompsocthanksitem C. Nguyen and G. Carneiro are with the Australian Institute for Machine Learning, University of Adelaide, SA, Australia 5000.\protect\\
\IEEEcompsocthanksitem T.-T. Do is with the Department of Data Science and AI, Faculty of Information Technology, Monash University.}
\thanks{Work in progress.}}

%
%

\markboth{}
{Nguyen \MakeLowercase{\textit{et al.}}: \mytitle}
%



\IEEEtitleabstractindextext{%
\begin{abstract}
    We introduce a new and rigorously-formulated PAC-Bayes meta-learning algorithm that solves few-shot learning. Our proposed method extends the PAC-Bayes framework from a single task setting to the meta-learning multiple task setting to upper-bound the error evaluated on any, even unseen, tasks and samples. We also propose a generative-based approach to estimate the posterior of task-specific model parameters more expressively compared to the usual assumption based on a multivariate normal distribution with a diagonal covariance matrix. We show that the models trained with our proposed meta-learning algorithm are well calibrated and accurate, with state-of-the-art calibration and classification results on few-shot classification (mini-ImageNet and tiered-ImageNet) and regression (multi-modal task-distribution regression) benchmarks.
\end{abstract}

\begin{IEEEkeywords}
PAC Bayes, meta-learning, few-shot learning, transfer learning.
\end{IEEEkeywords}}

\maketitle

\IEEEdisplaynontitleabstractindextext

%
\IEEEpeerreviewmaketitle

\ifCLASSOPTIONcompsoc
\IEEEraisesectionheading{\section{Introduction}\label{sec:introduction}}
\else
\section{Introduction}
\label{sec:introduction}
\fi

\IEEEPARstart{O}{ne} unique ability of humans is to quickly learn new tasks with only a few \textit{training} examples. This is due to the fact that humans tend to exploit prior experience to facilitate the learning of new tasks. Such exploitation is markedly different from  conventional machine learning approaches, where no prior knowledge (e.g. training from scratch with random initialisation)~\cite{glorot2010understanding}, or weak prior knowledge (e.g., fine tuning from pre-trained models)~\cite{rosenstein2005transfer} are employed to learn a new task. This motivates the development of novel learning algorithms that can effectively encode the knowledge learnt from training tasks, and exploit that knowledge to quickly adapt to future tasks~\cite{lake2015human}.

Prior knowledge can be helpful for future learning only if all tasks are assumed to be distributed according to a latent task distribution. Learning this latent distribution is, therefore, useful for solving an unseen task, even if the task contains a limited number of training examples. Many approaches have been proposed and developed to achieve this goal, namely: \textit{multi-task learning}~\cite{caruana1997multitask}, \textit{domain adaptation}~\cite{bridle1991recnorm,ben2010theory} and \textit{meta-learning}~\cite{schmidhuber1987evolutionary,thrun1998learning}. Among these, meta-learning has flourished as one of the most effective methods due to its ability to leverage the knowledge learnt from many training tasks to quickly adapt to unseen tasks.

Recent advances in meta-learning have produced state-of-the-art results in many benchmarks of few-shot learning data sets~\cite{santoro2016meta,ravi2017optimization,munkhdalai2017meta,snell2017prototypical,finn2017model,zhang2018metagan,rusu2019meta}. Learning from a few training examples is often difficult and easily leads to over-fitting, especially when no model uncertainty is taken into account. This issue has been addressed by several recent probabilistic meta-learning approaches that incorporate model uncertainty into prediction, e.g., LLAMA (based on Laplace method)~\cite{grant2018recasting}, or PLATIPUS~\cite{finn2017model}, Amortised Bayesian Meta-learner (ABML)~\cite{ravi2018amortized} and VERSA~\cite{gordon2018metalearning} that use variational inference (VI). However, these studies have not thoroughly investigated the errors evaluated on arbitrary tasks (including seen and unseen) sampled from the same task distribution and arbitrary samples generated from the same task. This results in limited theoretical generalisation guarantees. Moreover, most of these studies are based on variational functions that may not represent well the richness of the underlying distributions. For instance, a common choice for the variational distribution relies on a multivariate normal distribution with a diagonal covariance matrix, which can potentially worsen the prediction accuracy given its limited representability.

In this paper, we address the two problems listed above with the following technical novelties: (i) derivation of a rigorous meta-learning objective that upper-bounds the errors evaluated on any tasks and any samples of few-shot learning setting based on the PAC-Bayes framework, and (ii) proposal of a novel implicit modelling approach to expressively represent the posterior of task-specific model parameter. Our evaluation shows that the models trained with our proposed meta-learning algorithm is at the same time well calibrated and accurate, with state-of-the-art results in few-shot classification (mini-ImageNet and tiered-ImageNet) and regression (multi-modal task-distribution regression) benchmarks in terms of accuracy, Expected Calibration Error (ECE) and Maximum Calibration Error (MCE).
\section{Related Work}
\label{sec:related_work}
    Our paper is related to probabilistic few-shot meta-learning techniques that have been developed to incorporate uncertainty into model estimation. LLAMA~\cite{grant2018recasting} employs the Laplace method to extend the deterministic estimation assumed in MAML~\citep{finn2017model} to a multivariate normal distribution. However, the need to estimate and invert the Hessian matrix of a loss function makes this approach computationally challenging for large-scale models, such as deep neural networks. Variational inference (VI) addresses such scalability issue -- remarkable examples of VI-based methods are PLATIPUS~\cite{finn2018probabilistic}, BMAML~\cite{yoon2018bayesian}, ABML~\cite{ravi2018amortized} and VERSA~\cite{gordon2018metalearning}. Although these VI-based approaches have demonstrated impressive results in regression, classification as well as reinforcement learning, they do not provide any theoretical guarantee on the error induced by arbitrary or even unseen tasks sampled from the same task distribution as well as any samples belonging to the same task. Moreover, the variational distributions used in most of these works are overly-simplified as multivariate normal distribution with diagonal covariance matrices. This assumption, however, limits the expressiveness of the variational approximation, resulting in a less accurate prediction.

    \green{Our work is also related to the PAC-Bayes framework used in meta-learning that upper-bounds errors with certain confidence levels~\cite{pentina2014pac, amit18meta}. The main difference between these previous works and ours is at the modelling of meta-parameter and its objective function. In the previous works, the meta-parameter is the prior of task-specific parameter which is analogous to a regularisation in task adaptation step, while in our proposed method, the meta-parameter is the model initialisation. In addition, the existing works rely on a train-train setting~\citep{bai2021important} where the all data of a task is used for task adaptation such as REPTILE~\citep{nichol2018first}, while ours follows the train-validation split with the bi-level optimisation objective shown below in \eqref{eq:meta_learning_objective}. Such differences lead to a discrepancy in the formulation of the corresponding PAC-Bayes bounds. Another work closely related to our proposed method is \emph{exponentially weighted aggregation for lifelong learning} (EWA-LL)~\citep{alquier2017regret}. In EWA-LL, each task-specific model is decomposed into a shared feature extractor and a task-specific classifier, while in our approach, each model is an adapted or a fine-tuned version of the meta-parameter. Moreover, the setting of EWA-LL follows the train-train meta-learning approach, making the algorithm analogous to multi-task learning, while our proposed method is a train-validation meta-learning approach with the bi-level optimisation objective.}
    

    \green{Our work has a connection to the statistical analysis of meta-learning that proves generalisation upper-bound for meta-learning algorithms~\citep{maurer2005algorithmic, maurer2016benefit}. Some typical recent works include the learning of the common regularisation that is used when adapting or fine-tuning on a specific task~\citep{denevi2018learning, denevi2019learning, denevi2019online, denevi2020advantage} to improve the performance of meta-learning algorithms in heterogeneous task environments, or analyse and optimise the regret induced by meta-learning algorithms in an online setting~\citep{khodak2019adaptive}. Our work differs from this line of works at how the meta-parameter is modelled. In our case, the meta-parameter of interest is the model initialisation, and our goal is to learn a variational distribution for such parameter, while existing works consider different parameter, such as the shared L2 regularisation parameters, as meta-parameter, and often learn a point estimate for such meta-parameters.}

\section{Background}
\label{sec:background}
    \subsection{Data generation model of a task}
    \label{sec:data_generation}
        \blue{A data point of a task indexed by \(i\) considered in this paper consists of an input \(\mathbf{x}_{ij} \in \mathcal{X} \subseteq \mathbb{R}^{d}\) and a corresponding label \(\mathbf{y}_{ij} \in \mathcal{Y}\) with \(j \in \mathbb{N}\). Such data points are generated in 2 steps. The first step is to generate the input \(\mathbf{x}_{ij}\) by sampling from some probability distribution \(\mathcal{D}_{i}\). The second step is to determine the label \(\mathbf{y}_{ij} = f(\mathbf{x}_{ij})\), where \(f_{i}: \mathcal{X} \to \mathcal{Y}\) is the \say{correct} labelling function. Note that both the probability distribution \(\mathcal{D}_{i}\) and the labelling function \(f_{i}\) are unknown. To simplify the notations, \((\mathbf{x}_{ij}, \mathbf{y}_{ij}) \sim (\mathcal{D}_{i}, f_{i})\) is then used to denote such data generation.}

    \subsection{Task instance}
    \label{sec:task}
        \begin{definition}{\citep{hospedales2021meta}}
            \label{def:task_instance}
            \blue{A task or a task instance \(\mathcal{T}_{i}\) consists of an unknown associated data generation model \((\mathcal{D}_{i}, f_{i})\), and a loss function \(\ell_{i}\), denoted as: \(\mathcal{T}_{i} = \{(\mathcal{D}_{i}, f_{i}), \ell_{i}\}\).}
        \end{definition}

        \begin{remark}
            \blue{The loss function \(\ell_{i}\) is defined abstractly, and not necessarily some common loss functions, such as mean squared error (MSE) or cross-entropy. For example, \(\ell_{i}\) could be referred to as negative log-likelihood if the objective is maximum likelihood estimation, or variational-free energy if the objective is based on variational inference.}
        \end{remark}

        \blue{To solve a task \(\mathcal{T}_{i}\), one needs to obtain an optimal task-specific model \(h(.; \mathbf{w}_{i}^{*}): \mathcal{X} \to \mathcal{Y}\), parameterised by \({\mathbf{w}^{*}_{i} \in \mathcal{W} \subseteq \mathbb{R}^{n}}\), which minimises a loss function \(\ell\) on the data of that task:}
        \begin{equation}
            \mathbf{w}_{i}^{*} = \arg\min_{\mathbf{w}_{i}} \mathbb{E}_{(\mathbf{x}_{ij}, \mathbf{y}_{ij}) \sim (\mathcal{D}_{i}, f_{i})} \left[ \ell_{i} (\mathbf{x}_{ij}, \mathbf{y}_{ij}; \mathbf{w}_{i}) \right].
        \end{equation}
        
        \blue{In practice, since both \(\mathcal{D}_{i}\) and \(f_{i}\) are unknown, the data generation model is replaced by a dataset consisting of a finite number of data-points generated according to the data generation model \((\mathcal{D}_{i}, f_{i})\), denoted as \(S_{i} = \{\mathbf{x}_{ij}, \mathbf{y}_{ij}\}_{j=1}^{m_{i}}\). The objective to solve that task is often known as empirical risk minimisation (ERM):}
        \begin{equation}
            \mathbf{w}^{\mathrm{ERM}}_{i} = \arg\min_{\mathbf{w}_{i}} \frac{1}{m_{i}} \sum_{j = 1}^{m_{i}} \left[ \ell_{i} (\mathbf{x}_{ij}, \mathbf{y}_{ij}; \mathbf{w}_{i}) \right].
            \label{eq:objective_minimise_loss}
        \end{equation}

        \blue{For simplicity, this paper considers two families of tasks: regression and classification. As a result, the label is a scalar \(\mathcal{Y} \subseteq \mathbb{R}\) for regression and \(\mathcal{Y} = \{0, 1, \ldots, n - 1\}\) for classification. In addition, the loss function used will be the same for each task family, hence, the subscript on the loss function will be dropped, and the loss is denoted as \(\ell\) throughout the paper. Due to the commonality of the loss function across all tasks, we will drop the notation of \(\ell\) when referring to a task. In other words, a task can be simply represented by either its data generation model \((\mathcal{D}_{i}, f_{i})\) or the associated dataset \(S_{i}\).}

    \subsection{Meta-learning}
    \label{sec:meta_learning}
        \begin{figure}[t]
            \centering
            \begin{tikzpicture}[scale=1, every node/.style={scale=1}]
                \pgfmathsetmacro{\yshift}{0.75}
                \pgfmathsetmacro{\xshift}{2.}
                \pgfmathsetmacro{\minsize}{0.9}
                \node[shape=circle, draw=black, minimum size=\minsize cm] at (0, 0) (theta) {\(\theta\)};
                \node[shape=circle, draw=black, minimum size=\minsize cm] at ([xshift=\xshift cm]theta) (w) {\(\lambda\)};
                
                \node[shape=circle, draw=black] at ([xshift=\xshift cm, yshift=\yshift cm]w) (y_t) {\(y^{(t)}\)};
                \node[shape=circle, draw=black] at ([xshift=\xshift cm]y_t) (x_t) {\(\mathbf{x}^{(t)}\)};
                
                \node[shape=circle, draw=black] at ([xshift=\xshift cm, yshift=-\yshift cm]w) (y_v) {\(y^{(v)}\)};
                \node[shape=circle, draw=black] at ([xshift=\xshift cm]y_v) (x_v) {\(\mathbf{x}^{(v)}\)};
                
                \draw[-Latex] (theta) to (w);
                \draw[-Latex] (w) to (y_t);
                \draw[-Latex] (w) to (y_v);
                \draw[-Latex] (x_t) to (y_t);
                \draw[-Latex] (x_v) to (y_v);
                
                \draw[-Latex, dashed] (y_t.north) to [out=150, in=60] (w.north);
                \draw[-Latex, dashed] (w.south) to [out=-150, in=-30] (theta.south);
                \draw[-Latex, dashed] (y_v.south) to [out=-150, in=-45] (theta.south);
                
                \draw[rounded corners] ([xshift=-0.5*\xshift cm, yshift=-0.8*\yshift cm]y_t) rectangle ([xshift=0.4*\xshift cm, yshift=0.8*\yshift cm]x_t);
                \draw[rounded corners] ([xshift=-0.5*\xshift cm, yshift=-0.8*\yshift cm]y_v) rectangle ([xshift=0.4*\xshift cm, yshift=0.8*\yshift cm]x_v);
                \draw[rounded corners] ([xshift=-0.5*\xshift cm, yshift=-2*\yshift cm]w) rectangle ([xshift=0.5*\xshift cm, yshift=\yshift cm]x_t);
            \end{tikzpicture}
            \caption{Meta-learning is an extension of hyper-parameter optimisation, where the meta-parameter \(\theta\) is shared across all tasks. The solid arrows denote forward pass, while the dashed arrows indicate parameter inference, and rectangles illustrate the plate notations. The training subset \(\{ ( \mathbf{x}_{ij}^{(t)}, y_{ij}^{(t)} ) \}_{j=1}^{m_{i}^{(t)}}\) of task \(\mathcal{T}_{i}\) and the meta-parameter \(\theta\) are used to learn the task-specific parameter \(\lambda_{i}\), corresponding to the lower-level optimisation in \eqref{eq:meta_learning_objective}. The obtained \(\lambda_{i}\) is then used to evaluate the error on the validation subset \(\{( \mathbf{x}_{ij}^{(v)}, y_{ij}^{(v)} )\}_{j=1}^{m_{i}^{(v)}}\) to learn the meta-parameter \(\theta\), corresponding to the upper-level optimisation in \eqref{eq:meta_learning_objective}.}
    	    \label{fig:meta_learning_diagram}
        \end{figure}
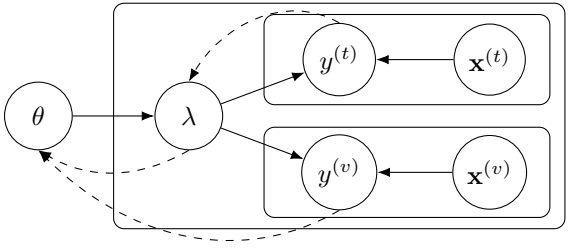

        \blue{The setting of the meta-learning problem considered in this paper follows the \emph{task environment}~\citep{baxter2000model} that describes the unknown distribution \(p(\mathcal{D}, f)\) over a family of tasks. Each task \(\mathcal{T}_{i}\) is sampled from this task environment and can be represented as \(\left( \mathcal{D}_{i}^{(t)}, \mathcal{D}_{i}^{(v)}, f \right)\), where \(\mathcal{D}_{i}^{(t)}\) and \(\mathcal{D}_{i}^{(v)}\) are the probability of training and validation input data, respectively, and they are not necessarily identical. The aim of meta-learning is to obtain a model trained on available training tasks such that the model can be fine-tuned on some labelled data of a testing task drawn from the same task environment to predict the label of unlabelled data on the same task accurately.}
        
        \blue{Such meta-learning methods use meta-parameters to model the common latent structure of the task distribution \(p(\mathcal{D}, f)\). Examples of meta-parameters are: model initialisation~\citep{finn2017model, finn2018probabilistic, ravi2018amortized, nguyen2020uncertainty}, learning rate when fine-tuning the model for a task~\citep{li2017meta}, feature extractor~\citep{vinyals2016matching, snell2017prototypical}, and optimiser~\citep{andrychowicz2016learning, ravi2017optimization}. In this point of view, meta-learning can be considered as an extension of hyper-parameter optimisation in single-task learning, where the hyper-parameter of interest or meta-parameter is shared across many tasks. Mathematically, the objective of meta-learning can be written as a bi-level optimisation:}
        \begin{equation}
            \begin{aligned}[b]
                & \min_{\psi} \mathbb{E}_{q(\theta; \psi)} \mathbb{E}_{p(\mathcal{D}, f)} \mathbb{E}_{q(\mathbf{w}_{i}; \lambda_{i}^{*}(\theta))} \mathbb{E}_{\left( \mathcal{D}_{i}^{(v)}, f_{i} \right)} \left[ \ell \left(  \mathbf{x}_{ij}^{(v)}, y_{ij}^{(v)}; \mathbf{w}_{i} \right) \right]\\
                & \text{s.t.: } \lambda_{i}^{*}(\theta) = \arg\min_{\lambda_{i}(\theta)} \mathbb{E}_{\left( \mathcal{D}_{i}^{(t)}, f_{i} \right)} \mathbb{E}_{q(\mathbf{w}_{i}; \lambda_{i}(\theta))} \left[ \ell \left( \mathbf{x}_{ij}^{(t)}, y_{ij}^{(t)}; \mathbf{w}_{i} \right) \right],\\
                & i \in \mathbb{N},
            \end{aligned}
            \label{eq:meta_learning_objective}
        \end{equation}
        \blue{where \(q(\theta; \psi)\), parameterised by \(\psi\), is a distribution over the meta-parameter \(\theta\), \(q(\mathbf{w}_{i}; \lambda(\theta))\), parameterised by \(\lambda_{i}(\theta)\), is a distribution of task-specific parameter \(\mathbf{w}_{i}\), \(\mathbb{E}_{ \left( \mathcal{D}_{i}^{(t)}, f_{i} \right)}\) means the expectation of input \(\mathbf{x}_{ij}^{(t)}\) sampled from \(\mathcal{D}_{i}^{(t)}\) and its label \(y_{ij}^{(t)} = f \left( \mathbf{x}_{ij}^{(t)} \right)\), and \(\mathbb{E}_{\left( \mathcal{D}_{i}^{(v)}, f_{i} \right)}\) is defined similarly. In addition, to simplify the notations, we drop the dependence of \(\theta\) from \(\lambda_{i}(\theta)\).}
        
        \blue{Note that the difference of the objective in \eqref{eq:meta_learning_objective} from hyper-parameter optimisation in single-task learning is at the upper-level where \eqref{eq:meta_learning_objective} consists of the additional expectation over all training tasks, denoted as \(\mathbb{E}_{p \left(\mathcal{D}, f \right)}\).}
        
        \blue{Depending on how the loss function \(\ell\) and distributions \(q(\theta; \psi)\) and \(q(\mathbf{w}_{i}; \lambda_{i})\) are defined, one can obtain different meta-learning algorithms. For example, if \(\ell\) corresponds to the loss in maximum likelihood and the lower-level is optimised by gradient descent with \(\theta\) as the initialisation of \(\lambda_{i}\):}
        \begin{equation}
            \begin{dcases}
                \ell(\mathbf{x}, y; \mathbf{w}) & = -\ln p(y | \mathbf{x}; \mathbf{w})\\
                \theta & = \text{initialisation of } \lambda_{i}\\
                q(\mathbf{w}_{i}; \lambda_{i}) & = \delta ( \mathbf{w}_{i} - \lambda_{i} )\\
                q(\theta; \psi) & = \delta \left( \theta - \psi \right),
            \end{dcases}
            \label{eq:maml_assumption}
        \end{equation}
        \blue{where \(\delta(.)\) is the Dirac delta function, then the objective in \eqref{eq:meta_learning_objective} can be simplified to:}
        \begin{equation}
            \begin{aligned}[b]
                & \min_{\theta} \mathbb{E}_{p(\mathcal{D}, f)} \mathbb{E}_{\left( \mathcal{D}_{i}^{(v)}, f_{i} \right)} \left[ -\ln p \left( y_{ij}^{(v)} | \mathbf{x}_{ij}^{(v)}; \mathbf{w}_{i} \right) \right]\\
                & \text{s.t.: } \mathbf{w}_{i}^{*} = \arg\min_{\mathbf{w}_{i}} \mathbb{E}_{\left( \mathcal{D}_{i}^{(t)}, f_{i} \right)} \left[ -\ln p \left( y_{ij}^{(t)} | \mathbf{x}_{ij}^{(t)}; \mathbf{w}_{i} \right) \right], i \in \mathbb{N},
            \end{aligned}
            \label{eq:maml_objective}
        \end{equation}
        \blue{which resembles the MAML algorithm~\citep{finn2017model}.}

        \blue{Another example is the probabilistic meta-learning algorithm that replaces the loss function \(\ell\) by a form of the variational-free energy and uses some similar assumptions in MAML:}
        \begin{equation}
            \begin{dcases}
                \ell(\mathbf{x}, y; \mathbf{w}) & = -\ln p(y | \mathbf{x}; \mathbf{w}) + \ln \left[ \frac{q(\mathbf{w}_{i}; \lambda_{i})}{p(\mathbf{w}_{i})} \right]\\
                \theta & = \text{initialisation of } \lambda_{i}\\
                q(\mathbf{w}_{i}; \lambda_{i}) & = \mathcal{N} \left( \mathbf{w}_{i}; \bm{\mu}_{\lambda_{i}}, \mathrm{diag} \left( \bm{\sigma}_{\lambda_{i}}^{2} \right) \right)\\
                q(\theta; \psi) & = \delta \left( \theta - \psi \right),
            \end{dcases}
        \end{equation}
        \blue{where \(\mathcal{N}(.)\) denotes multivariate normal distribution, \(\mathrm{diag}(.)\) denotes a diagonal matrix, and \(p(\mathbf{w}_{i})\) is the prior of \(\mathbf{w}_{i}\). This formulation resembles ABML~\citep{ravi2018amortized} and VAMPIRE~\citep{nguyen2020uncertainty} algorithms.}

    \subsection{PAC-Bayes upper-bound in single-task learning}
    \label{sec:pac_bayes_single_task}
        \blue{In practice, the data probability distribution \(\mathcal{D}_{i}\) and the labelling function \(f_{i}\) of a task \(\mathcal{T}_{i}\) are unknown, but only a dataset \(S_{i}\) consisting of finite input data and labels is available. Since the aim is to minimise the loss averaged over all data generated from \((\mathcal{D}_{i}, f_{i})\), it is, therefore, important to analyse the difference between such \say{true} loss and the empirical loss evaluated on a given dataset. Such difference can be upper-bounded by the Kullback-Leibler (KL) divergence shown in Theorem~\ref{theorem:pac_bayes_single_task} with a certain level of confidence.}

        \begin{theorem}{\citep{mcallester1999pac}}
        \label{theorem:pac_bayes_single_task}
            \blue{If a dataset \(S_{i}\) consists of \(m_{i}\) inputs \(\mathbf{x}_{ik}\) i.i.d. sampled from a data probability distribution \(\mathcal{D}_{i}\) and being labelled by \(f_{i}\), \(\mathcal{H}\) is a hypothesis class, \(\varepsilon \in (0, 1]\) and a loss function \(\ell: \mathcal{H} \times \mathcal{Y} \to [0, 1]\), then for any \say{posterior} \(q(\mathbf{w}_{i}; \lambda_{i})\) over a hypothesis \(h(.; \mathbf{w}_{i}) \in \mathcal{H}\), parameterised by \(\mathbf{w}_{i}\), the following holds with a probability at least \(1 - \varepsilon\):}
            \begin{equation*}
                \begin{aligned}[b]
                    & \mathbb{E}_{q(\mathbf{w}_{i}; \lambda_{i})} \mathbb{E}_{(\mathcal{D}_{i}, f_{i})} \left[ \ell (\mathbf{x}_{ij}, y_{ij}; \mathbf{w}_{i}) \right] \\
                    & \le \frac{1}{m_{i}} \sum_{k = 1}^{m_{i}} \mathbb{E}_{q(\mathbf{w}_{i}; \lambda_{i})} \left[ \ell(\mathbf{x}_{ik}, y_{ik}; \mathbf{w}_{i}) \right] \\
                    & \quad + \sqrt{ \frac{\mathrm{KL} \left[ q(\mathbf{w}_{i}; \lambda_{i}) || p(\mathbf{w}_{i}) \right] + \frac{\ln m_{i}}{\varepsilon}}{2(m_{i} - 1)}},
                \end{aligned}
            \end{equation*}
            \blue{where \(p(\mathbf{w}_{i})\) is the prior of \(\mathbf{w}_{i}\).}
        \end{theorem}

        \blue{Instead of minimising the \say{true} loss of a task, denoted as the left-hand side term in Theorem~\ref{theorem:pac_bayes_single_task}, which is intractable, one should minimise both the empirical loss and the KL divergence on the right-hand side. Indeed, the upper-bound in Theorem~\ref{theorem:pac_bayes_single_task} is often used as a tractable learning objective function for the model of interest.}
\section{Methodology}
\label{sec:methodology}
    \subsection{PAC-Bayes meta-learning}
        \purple{In practice, the training and validation data probability distribution, \(\mathcal{D}_{i}^{(t)}\) and \(\mathcal{D}_{i}^{(v)}\), in meta-learning, are unknown, and only two datasets with finite examples, \(S_{i}^{(t)} = \left\{ \left( \mathbf{x}_{ij}^{(t)}, y_{ij}^{(t)} \right) \right\}_{j=1}^{m_{i}^{(t)}}\) and \(S_{i}^{(v)} = \left\{ \left( \mathbf{x}_{ik}^{(v)}, y_{ik}^{(v)} \right) \right\}_{k=1}^{m_{i}^{(v)}}\) associated with task \(\mathcal{T}_{i}\), are provided as illustrated in \figureautorefname~\ref{fig:meta_learning_diagram}. Note that \(m_{i}^{(t)}\) and \(m_{i}^{(v)}\) are not necessarily identical, and the label \(y_{ij}^{(v)}\) in the validation dataset \(S_{i}^{(v)}\) are known in training, while being unknown in testing. In addition, there are only \(T\) tasks sampled from the task environment that are available for training. Hence, it is important to derive an upper-bound, and in particular PAC-Bayes upper-bound, for both the \say{generalisation} losses in upper- and lower-levels of \eqref{eq:meta_learning_objective}. Since the lower-bound corresponds to solving a single task, the upper-bound derived in Theorem~\ref{theorem:pac_bayes_single_task} can be straight-forwardly applied as the learning objective. The remaining problem lies on the formulation of the PAC-Bayes upper-bound for the loss in the upper-level of \eqref{eq:meta_learning_objective}. This novel bound is shown in Theorem~\ref{theorem:pac_bayes_meta_learning} with its detailed proof in \appendixautorefname~\ref{apdx:pac_bayes_proof}.}
    
        \begin{restatable}[]{theorem}{MetaBound}
    	\label{theorem:pac_bayes_meta_learning}
        	\purple{Given \(T\) tasks sampled from the same task environment \(p(\mathcal{D}, f)\), where each task has an associated dataset \(S_{i}\) with samples generated from the task-specific data generation model \((\mathcal{D}_{i}, f_{i})\), then for a bounded loss function \(\ell: \mathcal{W} \times \mathcal{Y} \to [0, 1]\) and any distributions \(q(\theta; \psi)\) of meta-parameter \(\theta\) and \(q(\mathbf{w}_{i}; \lambda_{i})\) of task-specific parameter \(\mathbf{w}_{i}\), the following holds with the probability at least \(1 - \varepsilon, \forall \varepsilon \in (0, 1]\):}
        	\begin{equation*}
        		\begin{aligned}[b]
            		& \mathbb{E}_{q(\theta; \psi)} \mathbb{E}_{p(\mathcal{D}, f)} \mathbb{E}_{q(\mathbf{w}_{i}; \lambda_{i})} \mathbb{E}_{(\mathcal{D}_{i}^{(v)}, f_{i})} \left[ \ell \left( \mathbf{x}_{ij}^{(v)}, y_{ij}^{(v)}; \mathbf{w}_{i} \right) \right] \\
            		& \quad \le \frac{1}{T} \sum_{i = 1}^{T} \frac{1}{m_{i}^{(v)}} \sum_{k = 1}^{m_{i}^{(v)}} \mathbb{E}_{q(\theta; \psi)} \mathbb{E}_{q(\mathbf{w}_{i}; \lambda)} \left[ \ell \left( \mathbf{x}_{ik}^{(v)}, y_{ik}^{(v)}; \mathbf{w}_{i} \right) \right] \\
            		& \qquad + \sqrt{ \frac{ \mathbb{E}_{q(\theta; \psi)} \left[ \mathrm{KL} \left[ q(\mathbf{w}_{i}; \lambda_{i}) || p(\mathbf{w}_{i}) \right] \right] + \frac{T^{2}}{(T - 1) \varepsilon}\ln m_{i}^{(v)} }{2 \left( m_{i}^{(v)} - 1 \right)} } \\
            		& \qquad \quad + \sqrt{ \frac{ \mathrm{KL}  \left[ q(\theta; \psi) || p(\theta) \right] + \frac{T \ln T}{\varepsilon} }{2 (T - 1)} },
        		\end{aligned}
        	\end{equation*}
        	\purple{where \(p(\mathbf{w}_{i}), \forall i \in \{1, \ldots, T\}\) is the prior of task-specific parameter \(\mathbf{w}_{i}\) and \(p(\theta)\) is the prior of meta-parameter \(\theta\).}
        \end{restatable}
        
        \begin{proofsketch}
            \purple{The proof is carried out in three steps: (i) derive a PAC-Bayes upper-bound for unseen samples generated from task-specific data generation model \((\mathcal{D}_{i}^{(v)}, f_{i}), \forall i \in \{1, \ldots, T\}\) by adapting the proof of Theorem~\ref{theorem:pac_bayes_single_task} as shown in \appendixautorefname~\ref{sec:pac_bayes_bound_unseen_samples_single_task}, (ii) derive a PAC-Bayes upper-bound for unseen tasks by applying Theorem~\ref{theorem:pac_bayes_single_task} as shown in \appendixautorefname~\ref{sec:pac_bayes_bound_unseen_tasks}, and (iii) combine the two obtained results as shown in \appendixautorefname~\ref{sec:pac_bayes_meta_learning} to complete the proof.}
        \end{proofsketch}

        \begin{remark}
        \label{rm:bounded_loss}
            \purple{One limitation of Theorems~\ref{theorem:pac_bayes_single_task} and \ref{theorem:pac_bayes_meta_learning} is the assumption of bounded loss which restricts the loss function within \([0, 1]\). Although there are several works that extend PAC-Bayes bound for unbounded losses~\citep{catoni2004statistical,germain2016pac,alquier2016properties,alquier2018simpler}, their formulations still needs to assume the boundedness of the moment generating function of some particular loss functions. This assumption is, however, impractical since some common loss functions in practice, such as mean squared error or cross-entropy, do not possess such property. Nevertheless, our main focus in this paper is to provide a theoretical generalisation guarantee for meta-learning using PAC-Bayes theory. Such extension is not necessary since in the implementation our loss is clipped to be within \([0,1]\).}
        \end{remark}

    \subsection{Practical meta-learning objective}
        \purple{Instead of following the objective in \eqref{eq:meta_learning_objective}, which is intractable due to the unknown data generation model of each task and the unknown task environment, we utilise the results in Theorems~\ref{theorem:pac_bayes_single_task} and \ref{theorem:pac_bayes_meta_learning} to propose a new objective function for a practical meta-learning that theoretically guarantees the generalisation errors due to unseen samples of a task and unseen tasks sampled from the task environment. The new objective is formulated by replacing the optimisation in the lower-level by the minimisation of the corresponding upper-bound in Theorem~\ref{theorem:pac_bayes_single_task}, and the optimisation in the upper-level by the minimisation of the upper-bound in Theorem~\ref{theorem:pac_bayes_meta_learning}.}

    \subsection{Meta-learning with implicit task-specific posterior}
    \label{sec:implicit_distribution}
        Given the new objective function where both the lower- and upper-levels are replaced by the minimisation of the PAC-Bayes upper-bounds in Theorems~\ref{theorem:pac_bayes_single_task} and \ref{theorem:pac_bayes_meta_learning}, there are a total of four distributions of interest: the \say{posteriors} \(q(\theta; \psi)\) and \(q(\mathbf{w}_{i}; \lambda_{i}), i \in \{1, \ldots, T\}\), and the priors \(p(\theta)\) and \(p(\mathbf{w}_{i})\).

        As priors represent the modelling assumption and are chosen before observing data, both \(p(\theta)\) and \(p(\mathbf{w}_{i})\) are, therefore, assumed to be multivariate normal distributions with diagonal covariance matrices:
        \begin{align}
            p(\theta) & = \mathcal{N} \left( \theta; \pmb{0}, \sigma_{\theta}^{2} \mathbf{I} \right) \label{eq:prior_theta}\\
            p(\mathbf{w}_{i}) & = \mathcal{N} \left( \mathbf{w}_{i}; \pmb{0}, \sigma_{w}^{2} \mathbf{I} \right), \label{eq:prior_w}
        \end{align}
        where \(\pmb{0}\) is a vector containing all zeros, \(\mathbf{I}\) is the identity matrix, \(\sigma_{\theta} > 0\) and \(\sigma_{w} > 0\) are hyper-parameters.

        For the posterior \(q(\theta; \psi)\) of the meta-parameter \(\theta\), it is often assumed to be a Dirac delta function: \(q(\theta; \psi) = \delta(\theta - \psi)\)~\citep{finn2017model, ravi2018amortized, nguyen2020uncertainty}. Such assumption would, however, make the KL divergence \(\mathrm{KL} \left[ q(\theta; \psi) || p(\theta) \right]\) in Theorem~\ref{theorem:pac_bayes_meta_learning} undefined. In this paper, \(q(\theta; \psi)\) is assumed to be a multivariate normal distribution with a fixed diagonal covariance matrix, denoted as:
        \begin{equation}
            q(\theta; \psi) = \mathcal{N} \left( \theta; \bm{\mu}_{\theta}, \sigma_{0} \mathbf{I} \right),
            \label{eq:q_theta}
        \end{equation}
        where \(\sigma_{0}\) is a hyper-parameter. Such notation also means that \(\psi = \bm{\mu}_{\theta}\).

        The only distribution left is the task-specific \say{posterior} \(q(\mathbf{w}_{i}; \lambda_{i})\). In general, \(q(\mathbf{w}_{i}; \lambda_{i})\) can be modelled following one of the two general types: prescribed and implicit \cite{diggle1984monte}. For example, ABML~\cite{ravi2018amortized} and VAMPIRE~\cite{nguyen2020uncertainty} are prescribed approaches where \(q(\mathbf{w}_{i}; \lambda_{i})\) is assumed to be a multivariate normal distribution with a diagonal covariance matrix. Such approximation is, however, inexpressive, resulting in a poor estimation. In contrast, implicit modelling only has access to the samples generated from the distribution of interest without assuming any analytical form of such distribution, e.g. the generator in generative adversarial networks~\citep{goodfellow2014generative}. In this paper, we use the implicit modelling approach to expressively represent \(q(\mathbf{w}_{i}; \lambda_{i})\). 

        The distribution \(q(\mathbf{w}_{i}; \lambda_{i})\) is now defined at a more fundamental level whereby data is generated through a stochastic mechanism without specifying a parametric distribution. A parameterised model (i.e., a generator \(G\) represented by a deep neural network) is used to model the sample generation:
		\begin{equation}
    		\mathbf{w}_{i} \sim q(\mathbf{w}_{i}; \lambda_{i}) \Leftrightarrow \mathbf{w}_{i} = G(\mathbf{z}; \lambda_{i}), \mathbf{z} \sim p(\mathbf{z}),
		\label{eq:generator}
		\end{equation}
		\orange{where \(\mathbf{z} \in \mathcal{Z} \subseteq \mathbb{R}^{z}\) is the latent noise sampled from a latent noise distribution \(p(\mathbf{z})\), which is often selected as the uniform in \([0, 1]^{z}\) or the standard normal distribution. In our implementation, we observe that due to the unconstrained support space of the standard normal distribution, latent noise sampled from such distribution may vary drastically, resulting in a large variation of the generated task-specific model parameter \(\mathbf{w}_{i}\) and potentially, making the training difficulty, especially at the beginning of the training. We, therefore, use the uniform distribution on \([0, 1]^{z}\) as the latent noise distribution \(p(\mathbf{z})\) to bound the support space \(\mathcal{Z}\) of the latent noise to make the training more stable.}

		\magenta{Due to the nature of implicit models, the KL divergence term \(\mathrm{KL} \left[ q(\mathbf{w}_{i}; \lambda_{i}) || p(\mathbf{w}_{i}) \right]\) in the lower-level and \(\mathrm{KL} \left[ q(\mathbf{w}_{i}; \lambda_{i}^{*}) || p(\mathbf{w}_{i}) \right]\) in the upper-level of the bi-level optimisation objective cannot be evaluated either analytically or symbolically. We, therefore, propose to employ the lower-bound of the KL divergence shown in Lemma~\ref{lmm:compression} to estimate the value of the KL divergence to train the meta-learning model.}
		
		\begin{restatable}[Compression lemma~\citep{banerjee2006bayesian}]{lemma}{KLLowerBound}
        \label{lmm:compression}
            \magenta{For any measurable function \(\phi(h)\) on a set of predictors under consideration \(\mathcal{H}\), and any distributions \(P\) and \(Q\) on \(\mathcal{H}\), the following holds:}
            \begin{equation*}
                \mathbb{E}_{Q} \left[ \phi(h) \right] - \ln \mathbb{E}_{P} \left[ e^{\phi(h)} \right] \le \mathrm{KL} \left[ Q \Vert P \right].
            \end{equation*}
            
            Further,
            \begin{equation*}
                \sup_{\phi} \mathbb{E}_{Q} \left[ \phi(h) \right] - \ln \mathbb{E}_{P} \left[ e^{\phi(h)} \right] = \mathrm{KL} \left[ Q \Vert P \right].
            \end{equation*}
        \end{restatable}
        \begin{proof}
            Please refer to the proof in \appendixautorefname~\ref{apdx:auxiliary_lemmas}.
        \end{proof}

        \magenta{To model the function \(\phi: \mathcal{W} \to \mathbb{R}\) in Lemma~\ref{lmm:compression} to estimate the lower-bound of KL divergences, we use a neural network with parameter \(\omega_{i}\). The objective to obtain \(\phi\) is to maximise the left-hand side in Lemma~\ref{lmm:compression}, which can be expressed as:}
        \begin{equation}
            \omega_{i}^{*} = \arg\max_{\omega_{i}} \mathbb{E}_{q(\mathbf{w}_{i}; \lambda_{i})} \left[ \phi(\mathbf{w}_{i}; \omega_{i}) \right] - \ln \mathbb{E}_{p(\mathbf{w}_{i})} \left[ e^{\phi(\mathbf{w}_{i}; \omega_{i})} \right].
            \label{eq:KL_divergence_lower_bound_objective}
        \end{equation}
        
        \magenta{The KL divergence between the task-specific posterior \(q(\mathbf{w}_{i}; \lambda_{i})\) and prior \(p(\mathbf{w}_{i})\) can then be estimated as:}
        \begin{equation}
            \begin{aligned}[b]
                \mathrm{KL} \left[ q(\mathbf{w}_{i}; \lambda_{i}) || p(\mathbf{w}_{i}) \right] & = \mathbb{E}_{q(\mathbf{w}_{i}; \lambda_{i})} \left[ \phi(\mathbf{w}_{i}; \omega_{i}^{*}) \right] \\
                & \quad - \ln \mathbb{E}_{p(\mathbf{w}_{i})} \left[ e^{\phi(\mathbf{w}_{i}; \omega_{i}^{*})} \right].
            \end{aligned}
            \label{eq:KL_divergence_lower_bound}
        \end{equation}
		
		One problem that arises when estimating the losses in both the lower- and upper-level of the meta-learning objective is to learn a different \(\omega_{i}\) for each different task \(\mathcal{T}_{i}\) by training the neural network \(\phi\) from scratch. The downside of such naive implementation is the significant increase in training time. We, therefore, propose to learn a good initialisation of \(\omega_{i}\) using MAML~\cite{finn2017model} to reduce the time of the KL divergence estimation. With this assumption, we define \(\omega_{0}\) as the meta-parameters (or initialisation) of \(\omega_{i}\). Within each task, we initialise \(\omega_{i}\) at \(\omega_{0}\) and optimise the loss in \eqref{eq:KL_divergence_lower_bound_objective} w.r.t. \(\omega_{i}\) using gradient descent. \(\omega_{0}\) is then obtained by optimising the validation loss evaluated on \(\omega_{i}^{*}\) w.r.t. \(\omega_{0}\).
		
		The proposed meta-learning method, therefore, consists of two parameters of interest: the hyper-meta-parameter \(\psi\) and the meta-parameter \(\omega_{0}\) of the \(\phi\)-network. Each of the parameters is optimised following their corresponding bi-level optimisation objective functions similar to \eqref{eq:meta_learning_objective}. Algorithm~\ref{algorithm:simpa_train} shows the training procedure of the proposed approach. In addition, we name the proposed approach as \emph{statistical implicit PAC-Bayes meta-learning} or SImPa for short, to simplify the text when comparing to other meta-learning methods.

		\begin{algorithm}[t]
    		\caption{SImPa}
    		\label{algorithm:simpa_train}

    		\begin{algorithmic}[1]
		        \Procedure{train}{}
		            \State initialise hyper-meta-parameter \(\psi = \bm{\mu}_{\theta}\)
		            \State initialise \(\phi\)-network meta-parameter \(\omega_{0}\)
		            \While{\(\psi\) and \(\omega_{0}\) not converged}
		                \State sample: \(\theta \sim \mathcal{N} \left( \theta; \bm{\mu}_{\theta}, \sigma_{0} \mathbf{I} \right)\)
		                \State sample a mini-batch of \(T\) tasks
		                \For{each task \(\mathcal{T}_{i}\)}
		                    \State \(\lambda_{i}, \omega_{i} \gets\) \Call{optimise lower-level}{$\theta, \omega_{0}, \mathcal{T}_{i}$}
		                    \State use \(\lambda_{i}\) to calculate PAC-Bayes upper-bound in Theorem~\ref{theorem:pac_bayes_meta_learning} \label{algorithm_step:validation_loss_psi}
		                    \State use \(\omega_{i}\) to calculate the KL lower-bound in \eqref{eq:KL_divergence_lower_bound}\label{algorithm_step:validation_KL_lower_bound}
		                \EndFor
		                \State \(\psi \gets \mathrm{SGD} \left( \text{average of \(T\) losses obtained in step \ref{algorithm_step:validation_loss_psi}} \right)\)
		                \State \(\omega_{0} \gets \mathrm{SGA}\)(average of \(T\) KL lower-bounds obtained in step \ref{algorithm_step:validation_KL_lower_bound})
		                \Statex \Comment{SGD/SGA = stochastic gradient descent/ascent}
		            \EndWhile
		            \State \textbf{return} \(\psi\) and \(\omega_{0}\)
		        \EndProcedure
                \Statex
                \Procedure{optimise lower-level}{$\theta, \omega_{0}, \mathcal{T}_{i}$}
                    \State initialise task-specific parameter: \(\lambda_{i} \gets \theta\)
                    \State initialise task-specific \(\phi\)-net: \(\omega_{i} \gets \omega_{0}\)
                    \State \(\omega_{i}^{*} \gets \arg\max_{\omega_{i}} \text{KL lower-bound in \eqref{eq:KL_divergence_lower_bound_objective}}\)
                    \State use \(\omega_{i}^{*}\) to calculate KL divergence in Eq.~\eqref{eq:KL_divergence_lower_bound}
                    \State \(\lambda_{i}^{*} \gets \arg\min_{\lambda_{i}} \text{PAC-Bayes upper-bound in Theorem~\ref{theorem:pac_bayes_single_task}}\)
                    \State \textbf{return} \(\lambda_{i}^{*}\) and \(\omega_{i}^{*}\)
                \EndProcedure
		    \end{algorithmic}
		\end{algorithm}

		One potential drawback of the implicit modelling approach is the curse of dimensionality, resulting in a computationally expensive training process. This is an active research question when dealing with generative models. This issue can be addressed by encoding the high-dimensional data, such as images, to a feature embedding space by supervised-learning on the same training data set. For example, in one of our experiments, we also show how to use image features extracted from a wide-residual network trained on tiered-ImageNet training set~\citep{rusu2019meta} to mitigate this issue. This strategy reduces the dimension of the input space, leading to smaller task-specific model parameter \(\mathbf{w}_{i}\), and eventually decreasing the size of the generator. The trade-off lies in the possibility of losing relevant information that can affect the performance on hold-out tasks. 
		
		
		It is also worth noting that our proposed method is easier to train than prior probabilistic meta-learning methods~\cite{finn2018probabilistic, ravi2018amortized} because we no longer need to tune the weighting factor of \(\mathrm{KL} \left[ q(\mathbf{w}_{i}; \lambda_{i}) || p(\mathbf{w}_{i}) \right]\) in both levels of the bi-level optimisation objective. Although weighting such KL divergence terms can be justified by casting the optimisation in each level of the meta-learning objective to a constrained optimisation as shown in \(\beta\)-VAE~\cite{higgins2016beta}, the weighting factor in such case is the corresponding Lagrange multiplier of the constrained optimisation. Thus, simply setting that weighting factor as a \say{tunable} hyper-parameter may result in a sub-optimal solution. In contrast, our proposed approach does not need to re-weight the KL divergences. The trade-off of our approach lies in the need to set the confidence parameter \(\varepsilon\), but tuning \(\varepsilon\) is arguably more intuitive than tuning the correct weighting factor for the KL divergence terms done in previous works.
\section{Experimental Evaluation}
\label{sec:experiments}
    \begin{figure*}[t]
		\centering
		\begin{subfigure}[t]{0.24\linewidth}
			\centering
			\includegraphics[width = \linewidth]{./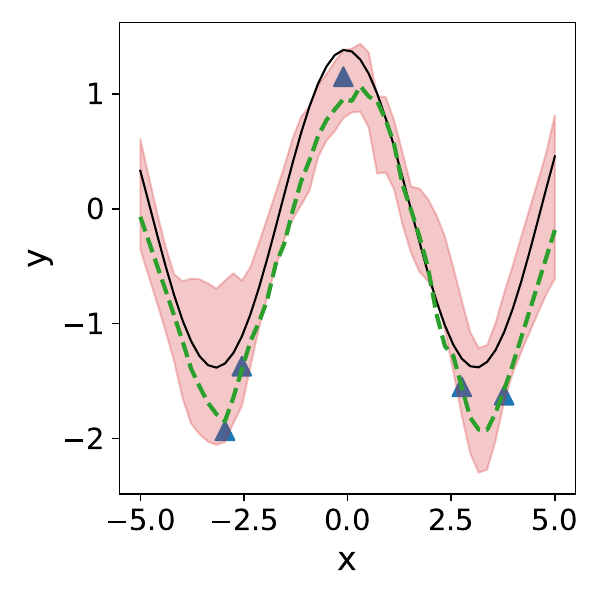}
		\end{subfigure}
		\begin{subfigure}[t]{0.24\linewidth}
			\centering
			\includegraphics[width = \linewidth]{./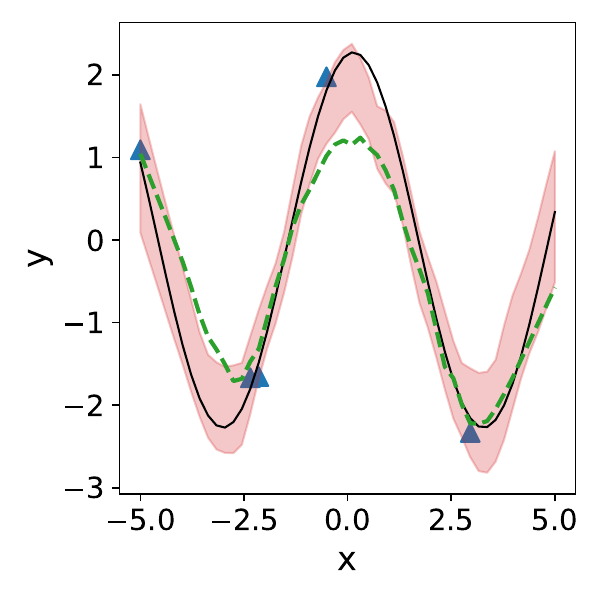}
		\end{subfigure}
		\begin{subfigure}[t]{0.24\linewidth}
			\centering
			\includegraphics[width = \linewidth]{./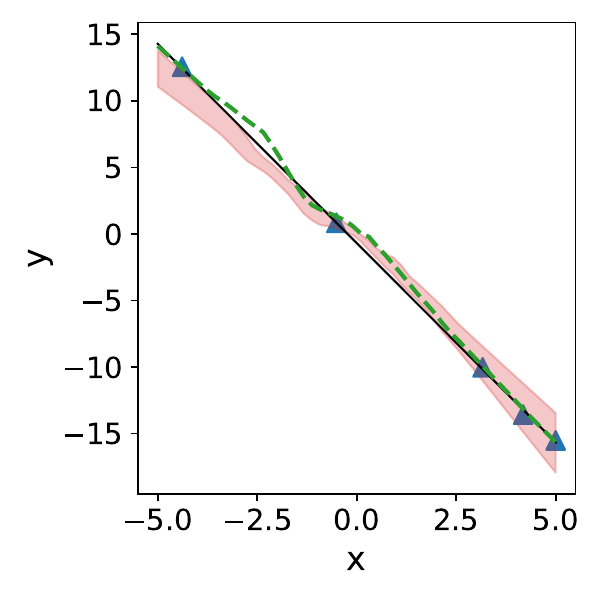}
		\end{subfigure}
		\begin{subfigure}[t]{0.24\linewidth}
			\centering
			\includegraphics[width = \linewidth]{./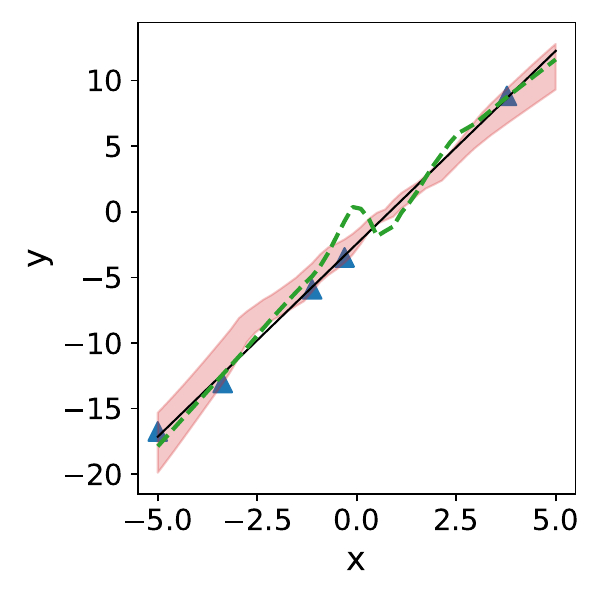}
		\end{subfigure}
		\\
		\includegraphics[height=1.5em]{./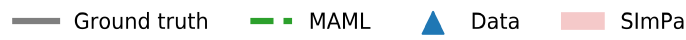}
		\caption{SImPa and MAML are compared in a regression problem when training is based on multi-modal data -- half of the tasks are generated from sinusoidal functions, and the other half are from linear functions. The shaded area is the prediction made by SImPa \(\pm\) 3\(\times\) standard deviation.}
		\label{fig:regression_visualisation}
	\end{figure*}
        
        \begin{figure*}[t]
        	\centering
        	\begin{subfigure}[b]{0.25 \linewidth}
        	    \centering
                \begin{tikzpicture}
                    \pgfplotstableread[col sep=&, row sep=\\, header=true]{
                        method & nll\\
                        MAML & 2.32\\
                        PLATIPUS & 2.52\\
                        BMAML & 1.45\\
                        ABML & 2.65\\
                        SImPa & 1.58\\
                    } \myTable

                    \pgfplotstablegetrowsof{\myTable}
                    \pgfmathsetmacro{\NumRows}{\pgfplotsretval-1}

                    \begin{axis}[
                        height = 0.9 \linewidth,
                        width = 0.9 \linewidth,
                        xbar = 0pt,
                        xmin = 1.25,
                        xmax = 2.75,
                        xticklabel style = {font=\footnotesize},
                        xlabel={Mean squared error},
                        xlabel style = {font=\footnotesize},
                        ytick=data,
                        yticklabels from table={\myTable}{method},
                        yticklabel style = {font=\footnotesize, align=left},
                        scale only axis,
                        enlarge x limits=auto,
                        enlarge y limits=0.1,
                        grid=major,
                        grid style={dotted, thick},
                        every axis plot/.append style={fill=NavyBlue, draw=none}
                    ]
                        \addplot[] table [y expr=\NumRows - \coordindex, x=nll]{\myTable};
                    \end{axis}
                \end{tikzpicture}
        	    \caption{MSE on hold-out tasks}
        	    \label{fig:SImPa_regression_NLL}
        	\end{subfigure}
        	\hspace{2em}
        	\hfill
        	\begin{subfigure}[b]{0.25\linewidth}
        		\centering
                \begin{tikzpicture}
                    \pgfplotstableread[col sep=comma, row sep=\\, header=true]{
                        x,MAML,PLATIPUS,BMAML,ABML,SImPa\\
                        0,0.4860619902610778809,0.36753817,0.32526379,0.40130252,0.20724078\\
                        0.1,0.4860619902610778809,0.39879262,0.38103479,0.43856199,0.32954753\\
                        0.2,0.4860619902610778809,0.42135352,0.42053903,0.464544,0.3804987\\
                        0.3,0.4860619902610778809,0.44004638,0.45217654,0.48548708,0.42313374\\
                        0.4,0.4860619902610778809,0.4571563,0.47961517,0.50396608,0.46094437\\
                        0.5,0.4860619902610778809,0.47404441,0.50541934,0.52153283,0.49935024\\
                        0.6,0.4860619902610778809,0.49129097,0.53134508,0.53895331,0.53686566\\
                        0.7,0.4860619902610778809,0.50953219,0.55830806,0.55695198,0.57479658\\
                        0.8,0.4860619902610778809,0.52991487,0.58850907,0.57726778,0.61686458\\
                        0.9,0.4860619902610778809,0.55434502,0.62519009,0.60150843,0.67059377\\
                        1,0.4860619902610778809,0.59050844,0.67733319,0.63569099,0.78212945\\
                    } \calibrationTable
                    
                    \begin{axis}[
                        height = 0.9 \linewidth,
                        width = 0.9 \linewidth,
                        xlabel={Observed probability},
                        xlabel style={font=\footnotesize},
                        xticklabel style = {font=\footnotesize},
                        ylabel={Expected probability},
                        ylabel style = {font=\footnotesize, yshift=0em},
                        yticklabel style = {font=\footnotesize},
                        scale only axis,
                        table/x=x,
                        legend entries = {MAML, PLATIPUS, BMAML, ABML, SImPa},
                        legend style = {draw=none, font=\scriptsize, yshift=0.25em, /tikz/every even column/.append style={column sep=0.5em}},
                        legend cell align = {left},
                        legend columns = 2,
                        legend image post style={scale=0.5},
                        legend pos = north west
                    ]
                        \addplot[mark=none, NavyBlue, dashdotted, very thick] table[y={MAML}]{\calibrationTable};
                        \addplot[mark=none, PineGreen, dashed, very thick] table[y= {PLATIPUS}]{\calibrationTable};
                        \addplot[mark=none, BurntOrange, densely dashdotted, very thick] table[y = {BMAML}]{\calibrationTable};
                        \addplot[mark=none, RoyalPurple, dash pattern=on 4pt off 1pt, very thick] table[y = {ABML}]{\calibrationTable};
                        \addplot[mark=none, BrickRed, solid, very thick] table[y = {SImPa}]{\calibrationTable};
                        
                        \addplot[mark=none, Black, densely dashed] table[y = {x}]{\calibrationTable};
                    \end{axis}
                \end{tikzpicture}
        		\caption{Reliability diagram}
        		\label{fig:SImPa_regression_reliability_chart}
        	\end{subfigure}
        	\hfill
        	\begin{subfigure}[b]{0.25\linewidth}
                \begin{tikzpicture}
                    \pgfplotstableread[col sep=comma, row sep=\\, header=true]{
                        method,ECE,MCE\\
                        MAML,0.2739943645217202,0.5139380097389221\\
                        PLATIPUS,0.2213864627272727,0.40949156\\
                        BMAML,0.18939665181818183,0.32526379\\
                        ABML,0.21863836454545452,0.40130252\\
                        SImPa,0.14734225818181818,0.22954752999999997\\
                    } \ceTable
                    
                    \pgfplotstablegetrowsof{\ceTable}
                    \pgfmathsetmacro{\NumRows}{\pgfplotsretval-1}

                    \begin{axis}[
                        width = 0.9 \linewidth,
                        height = 0.9 \linewidth,
                        xbar=0pt,
                        ytick=data,
                        yticklabels from table={\ceTable}{method},
                        yticklabel style = {font=\footnotesize},
                        xticklabel style = {font=\footnotesize},
                        xlabel style = {font=\footnotesize},
                        xlabel = {Calibration error},
                        scale only axis,
                        enlarge x limits=auto,
                        enlarge y limits=0.15,
                        grid=major,
                        grid style={dotted, thick},
                        every axis plot/.append style={fill, draw=none},
                        legend entries = {ECE,MCE},
                        legend style = {draw=none, font=\scriptsize, yshift=0em, /tikz/every even column/.append style={column sep=0.5em}},
                        legend cell align = {left},
                        legend columns = 2,
                        legend image post style={scale=1},
                        legend pos = south east
                    ]
                        \addplot[mark=none, NavyBlue] table [y expr=\NumRows - \coordindex, x = {ECE}] {\ceTable};
                        \addplot[mark=none, BrickRed] table [y expr=\NumRows - \coordindex, x = {MCE}] {\ceTable};
                    \end{axis}
                \end{tikzpicture}
        		\caption{ECE and MCE}
        		\label{fig:SImPa_regression_ECE_MCE}
        	\end{subfigure}
        	\hfill
        	\hphantom{x}

        	\caption{Quantitative comparison between various probabilistic meta-learning approaches averaged over 1000 unseen tasks shows that SImPa has a comparable MSE error and the smallest calibration error.}
        	\label{fig:regression_calibration}
        \end{figure*}
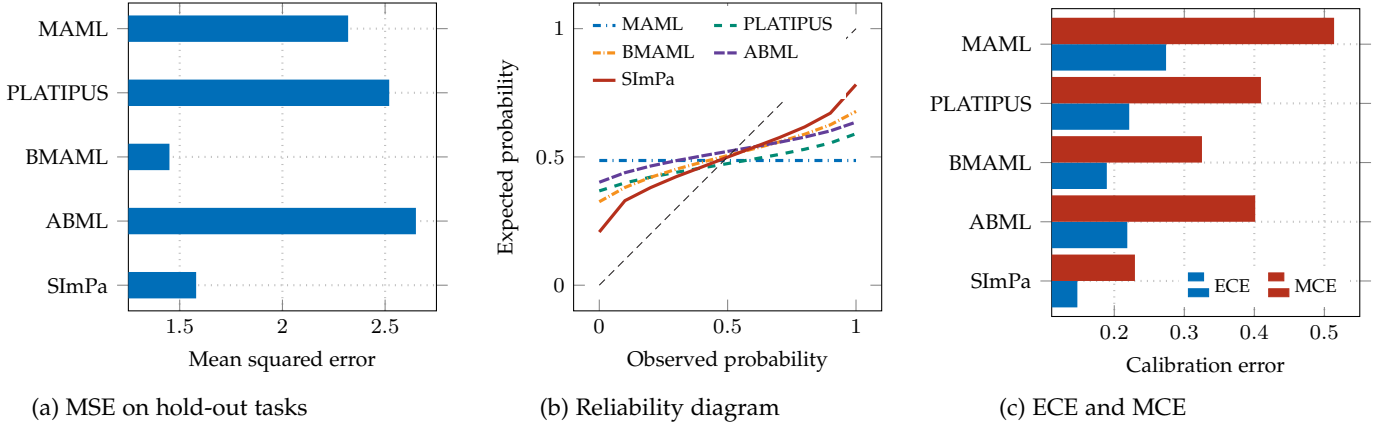

    In this section, SImPa is evaluated on few-shot regression and classification problems and compared to common meta-learning baselines, such as MAML~\citep{finn2017model}, ABML~\citep{ravi2018amortized}, PLATIPUS~\citep{finn2018probabilistic} and BMAML~\citep{yoon2018bayesian}.

    In the experiments, the loss function \(\ell\) is assumed to be the negative log-likelihood (NLL): \(-\ln p(y_{ij} | \mathbf{x}_{ij}; \mathbf{w}_{i})\). In particular, the NLL is the mean-squared error (MSE) for regression and cross-entropy for classification. Following the assumption of bounded losses made in Section~\ref{sec:methodology}, the data-related losses in both the lower- and upper-level, \(\ell(\mathbf{x}_{ij}, y_{ij}; \mathbf{w}_{i})\) are clipped to \([0, 1]\). In addition, Monte Carlo (MC) sampling is used to evaluate the expectation over \(q(\theta; \psi)\) and \(q(\mathbf{w}_{i}; \lambda_{i})\). In particular, one sample of meta-parameter \(\theta\) and one sample of task-specific parameter \(\mathbf{w}_{i}\) are used in training, while these numbers are one and thirty-two in testing, respectively. 
    The asymmetric choice of those hyper-parameters is to optimise running time in training, while maximising the prediction performance in evaluation. 
    For the hyper-parameters defined in \eqref{eq:prior_theta}, \eqref{eq:prior_w} and \eqref{eq:q_theta}, \(\sigma_{\theta} = \sigma_{\mathbf{w}} = 1\) and \(\sigma = 10^{-6}\). In addition, the confidence parameters is selected as \(\varepsilon = \varepsilon_{i} = 0.1, \forall i \in \{1, \ldots, T\}\). The number of tasks per an update of the parameters of interest is \(T = 20\).

    In term of generating \(\mathbf{w}_{i}\) for each task \(\mathcal{T}_{i}\), we use latent noise vectors \(\mathbf{z}\) sampled from the uniform distribution in \([0, 1]^{128}\). The generator in \eqref{eq:generator} is modelled by a fully-connected neural network with 2 hidden layers. Each of these layers consists of 256 and 512 hidden units, respectively, and is activated by rectified linear unit without any batch normalisation. The output of the final layer is then activated by hyperbolic tangent function to constrain the parameter of the base network, avoiding the loss value from exploding. The \(\phi\)-network has an \say{inverted} architecture of the generator, which consists of 3-hidden layers. These layers consist of 512, 256 and 128 hidden units, respectively, and are also activated by rectified linear unit without any batch normalisation. Adam optimiser~\citep{kingma2015adam} is used to optimise both the hyper-meta-parameter \(\psi\) and \(\omega_{0}\) with the same learning rate of \(10^{-4}\). To train the \(\phi\)-network for each task, 512 MC samples of the base network parameters are sampled from both \(q(\mathbf{w}_{i}; \lambda_{i})\) and \(p(\mathbf{w}_{i})\) to evaluate the lower-bound of the KL divergence in \eqref{eq:KL_divergence_lower_bound_objective}. To optimise~\eqref{eq:KL_divergence_lower_bound_objective} w.r.t. \(\omega_{i}\), gradient descent is used with a learning rate of \(10^{-4}\). 

    \subsection{Regression}
    \label{sec:regression}
    	The experiment in this subsection is a multi-modal task distribution where half of the data is generated from sinusoidal functions, while the other half is from linear functions \cite{finn2018probabilistic}. The sinusoidal function used in this experiment is in the form \(y = A\sin(x + \Phi) + \epsilon\), where \(A\) and \(\Phi\) are uniformly sampled from \([0.1, 5]\) and \([0, \pi]\), respectively, while the linear function considered is in the form \({y=ax + b + \epsilon}\), where \(a\) and \(b\) are randomly sampled from \([-3, 3]\). The noise \(\epsilon\) is sampled from \(\mathcal{N}(0, 0.3^{2})\). The experiment is carried out under the 5-shot setting: \(m_{i}^{(t)} = 5\), and the validation set \(S_{i}^{(v)}\) consists of \(m_{i}^{(v)} = 50\) data points. 

        Similar to existing works in the literature~\citep{finn2018probabilistic}, the base network to solve each regression task is a fully-connected network with 2 hidden layers, where each layer has 40 hidden units. Linear rectifier function is used as activation function, and no batch-normalisation is used. Gradient descent is used as the optimiser for the lower-level optimisation with learning rate fixed at \(10^{-3}\) and iterated 5 times.
        
        As shown in \figureautorefname{~\ref{fig:regression_visualisation}}, SImPa is able to vary the prediction variance, especially when there is more uncertainty in the training data, while MAML can only output a single value at each data point. For a quantitative comparison, we train many probabilistic meta-learning methods, including PLATIPUS~\cite{finn2018probabilistic}, BMAML~\cite{yoon2018bayesian} and ABML~\cite{ravi2018amortized}, in the same regression problem. Here, BMAML consists of 10 particles trained without Chaser Loss. As shown in \figureautorefname~\ref{fig:SImPa_regression_NLL}, SImPa achieves much smaller MSE, comparing to MAML, PLATIPUS and ABML, and comparable NLL to the non-parametric BMAML when being evaluated on the same hold-out tasks.

        To further evaluate the predictive uncertainty, we employ the reliability diagram based on the quantile calibration for regression~\cite{song19distribution}. The reliability diagram shows a correlation between predicted and actual probability. A perfectly calibrated model will have its predicted probability equal to the actual probability, and hence, align well with the diagonal \(y = x\). The results in \figureautorefname~\ref{fig:SImPa_regression_reliability_chart} show that the model trained with SImPa achieves the best calibration among all the methods considered. Due to the nature of a deterministic approach, MAML~\cite{finn2017model} is represented as a horizontal line, resulting in a poorly calibrated model. The two probabilistic meta-learning methods, PLATIPUS and ABML, perform better than MAML; however, the averaged slopes of their performance curves are quite close to MAML, implying that their multivariate normal posteriors of task-specific model parameters have small covariance diagonal values. This may be caused by their exclusive reliance on less-expressive multivariate normal distributions with diagonal covariance matrices. The performance of BMAML is slightly better than PLATIPUS and ABML due to its non-parameteric modelling approach. In contrast, SImPa employs a much richer variational distribution \(q(\mathbf{w}_{i}; \lambda_{i})\) for task specific parameters, and therefore, produces a model with better calibration. For another quantitative comparison, we plot the expected calibration error (ECE)~\cite{guo2017oncalibration}, which is the weighted average of the absolute errors measuring from the diagonal, and the maximum calibration error (MCE)~\cite{guo2017oncalibration}, which returns the maximum of absolute errors in \figureautorefname~\ref{fig:SImPa_regression_ECE_MCE}. Overall, SImPa outperforms all of the state-of-the-art methods in both ECE and MCE.
        
    \subsection{Few-shot classification}
    \label{sec:classification}
        We evaluate SImPa on the \(N\)-way \(k\)-shot setting, where a meta learner is trained on many related tasks containing \(N\) classes with \(k\) examples per class (\({m_{i}^{(t)} = kN}\)). The evaluation is carried out by comparing the results of SImPa against the results of state-of-the-art methods on three popular few-shot learning benchmarking data sets: Omniglot~\cite{lake2015human}, mini-ImageNet~\cite{vinyals2016matching,ravi2017optimization} and tiered-ImageNet~\cite{ren2018meta}.
	    
	    Omniglot dataset consists of 50 different alphabets with a total of 1623 characters drawn online via Amazon's Mechanical Turk by 20 different people. Hence, Omniglot is often considered as a \say{transposed} MNIST since Omniglot has many classes, but each class has 20 images. We follow the original train-test split where 30 alphabets are used for training, while the other 20 alphabets are used for testing. To be consistent with previous evaluations, we pre-process by down-sampling all images to 28-by-28 pixels. No data augmentation, such as rotation, is used. Note that for the task formation, many existing meta-learning methods in the literature use non-standard train-test split where characters of all 50 alphabets are mixed, and randomly split. This splitting potentially forms easier tasks since knowing a character in an alphabet might help to classify other characters within that same alphabet. Moreover, the mixed and random split is different from evaluation to evaluation, making it challenging to fairly compare different meta-learning methods.
        
        Mini-ImageNet~\cite{vinyals2016matching} is another dataset used to evaluate classification performance between different meta-learning methods. The dataset consists of 100 classes, where each class contains 600 colour images taken from ImageNet~\cite{ILSVRC15}. We follow the standard train-test split which uses 64 classes for training, 16 classes for validation, and 20 classes for testing~\cite{ravi2017optimization}. The images in the dataset are pre-processed by down-sampling to 84-by-84 pixels before any training is carried out. No data augmentation, such as image flipping or rotation, is used.
        
        Tiered-ImageNet is one of the largest subsets of ImageNet, which consists of total 608 classes grouped into 34 high-level categories~\cite{ren2018meta}. Tiered-ImageNet is often used as a benchmark for large-scaled few-shot learning. We also follow the standard train-test split that consists of 20 categories for training, 6 categories for validation, and 8 categories for testing. In addition, our evaluation is carried out by employing the features extracted from a residual network trained on the data and classes from the training set~\cite{rusu2019meta}.

        \begin{table}[t!]
	    	\begin{center}
	    		\begin{small}
	    			\begin{tabular}{l c c}
	    				\toprule
	    				\bfseries METHOD & \bfseries 1-SHOT & \bfseries 5-SHOT\\
	    				\midrule
	    				\midrule
	    				\multicolumn{3}{l}{\textbf{Omniglot~\cite{lake2015human} - standard 4-block CNN} }\\
	    				\midrule
	    				MAML~\cite{finn2017model} & \(97.143 \pm 0.005\) & \\
	    				Prototypical nets~\cite{snell2017prototypical} & \(96.359 \pm 0.006\) & \\
	    				BMAML~\cite{yoon2018bayesian} & \(94.104 \pm 0.008\) & \\
	    				ABML~\cite{ravi2018amortized} & \(97.281 \pm 0.004\) & \\
	    				\rowcolor{gray!30} \textbf{SImPa} & \textbf{98.352} \(\pm\) \textbf{0.005} & \\
	    				\midrule
	    				\midrule
	    				\multicolumn{3}{l}{\textbf{Mini-ImageNet~\cite{ravi2017optimization} - standard 4-block CNN} }\\
	    				\midrule
	    				Matching nets \cite{vinyals2016matching} & 43.56 $\pm$ 0.84 & 55.31 $\pm$ 0.73 \\
	    				Meta-learner LSTM \cite{ravi2017optimization} & 43.44 $\pm$ 0.77 & 60.60 $\pm$ 0.71 \\
	    				MAML \cite{finn2017model} & 48.70 $\pm$ 1.84 & 63.15 \(\pm\) 0.91 \\
	    				Prototypical nets \cite{snell2017prototypical}\tablefootnote{Trained on 30-way 1-shot setting} & 49.42 $\pm$ 0.78 & \textbf{68.20 $\pm$ 0.66} \\
	    				LLAMA \cite{grant2018recasting} & 49.40 $\pm$ 1.83 & \_ \\
	    				PLATIPUS \cite{finn2018probabilistic} & 50.13 $\pm$ 1.86 & \_ \\
	    				ABML \cite{ravi2018amortized} & 45.00 $\pm$ 0.60 & \_ \\
	    				\rowcolor{gray!30}\textbf{SImPa} & \textbf{51.72} \(\pm\) \textbf{0.48} & 63.49 \(\pm\) 0.40 \\
	    				\midrule
	    				\midrule
	    				\multicolumn{3}{l}{\textbf{Mini-ImageNet~\cite{ravi2017optimization} - non-standard network}}\\
	    				\midrule
	    				Relation nets \cite{Sung_2018_CVPR} & 50.44 $\pm$ 0.82 & 65.32 $\pm$ 0.70 \\
	    				VERSA \cite{gordon2018metalearning} & 53.40 $\pm$ 1.82 & 67.37 $\pm$ 0.86 \\
	    				SNAIL \cite{mishra2018simple} & 55.71 $\pm$ 0.99 & 68.88 $\pm$ 0.92 \\
	    				adaResNet \cite{munkhdalai2018rapid} & 56.88 $\pm$ 0.62 & 71.94 $\pm$ 0.57 \\
	    				TADAM \cite{oreshkin2018tadam} & 58.50 $\pm$ 0.30 & 76.70 $\pm$ 0.30 \\
	    				LEO \cite{rusu2019meta} & 61.76 $\pm$ 0.08 & \bfseries 77.59 $\pm$ 0.12 \\
	    				LGM-Net \cite{li2019lgm} & \bfseries 69.13 \(\pm\) 0.35 & 71.18 \(\pm\) 0.68 \\
	    				\rowcolor{gray!30} \textbf{SImPa}\tablefootnote{\label{ftnt:extracted_features}Use extracted features~\cite{rusu2019meta} as input} &  62.85 \(\pm\) 0.56  & \bfseries 77.65 \(\pm\) 0.50 \\
	    				\bottomrule
	    				\toprule
	    				\multicolumn{3}{l}{\textbf{Tiered-ImageNet}~\cite{ren2018meta} non-standard network}\\
	    				\midrule
	    				MAML \cite{liu2018transductive} & \(51.67 \pm 1.81\) & \(70.30 \pm 0.08\) \\
	    				Proto. Nets \cite{ren2018meta} & \(53.31 \pm 0.89\) & \(72.69 \pm 0.74\) \\
	    				Relation Net \cite{liu2018transductive} & \(54.48 \pm 0.93\) & \(71.32 \pm 0.78\) \\
	    				Trns. Prp. Nets \cite{liu2018transductive} & \(57.41 \pm 0.94\) & \(71.55 \pm 0.74\) \\
	    				LEO \cite{rusu2019meta} & \(66.33 \pm 0.05\) & \bfseries 81.44 \(\pm\) 0.09 \\
	    				MetaOptNet \cite{lee2019meta} & \(65.81 \pm 0.74\) & \bfseries 81.75 \(\pm\) 0.53 \\
	    				\rowcolor{gray!30}\textbf{SImPa}\textsuperscript{\getrefnumber{ftnt:extracted_features}} & \bfseries 70.26 \(\pm\) 0.35 & 80.15 \(\pm\) 0.28 \\
	    				\bottomrule
	    			\end{tabular}
	    		\end{small}
	    	\end{center}
	    	\caption{The few-shot 5-way classification accuracy results (in percentage, with 95\% confidence interval) of SImPa averaged over 1 million tasks on Omniglot (top), and 600 tasks on mini-ImageNet (middle-top and middle-bottom) and tiered-ImageNet (bottom) datasets. For each experiment, we select the top two methods with highest mean values, and apply t-test with 95 percent confidence. If they are significantly different, we highlight the method with largest mean in bold, otherwise, we highlight both of them.} 
	    	\label{tab:classification_accuracies}
	    \end{table}

        The evaluation is carried out in 2 cases: one with raw image data and the other with 640-dimensional image features extracted from a wide-residual network trained solely on training data~\citep{rusu2019meta}. In the \say{standard} case, the base network is the \say{standard} convolutional network consisting of 4 modules. 
        Each module has a convolutional layer with 32 3-by-3 filters, followed by a batch normalisation, ReLU and 2-by-2 max-pooling. The output of the final module is flatten and connected to a fully connected layer to predict the label of the input image. 
        In the \say{non-standard} case, the base network is a fully-connected network with 2 hidden layers. Each layer consists of 128 and 32 hidden units activated by rectified linear unit without any batch-normalisation.

        We report the classification accuracy of SImPa on these three data sets in \tableautorefname{~\ref{tab:classification_accuracies}}. For Omniglot, we use the published code to reproduce the results for some common meta-learning methods to fairly compare with SImPa. The accuracy averaged over more than 1 million testing tasks show that the proposed SImPa is better than competing meta-learning methods in the literature.
        For mini-ImageNet, SImPa achieves the best empirical results for the 1-shot setting when the base model is the \say{standard} CNN, and for the 5-shot setting when a different network architecture is used. SImPa shows the second best results for the 5-shot setting with the 4-layer CNN and the 1-shot setting with the different network architecture. Note that for the 5-shot setting using standard CNN, Prototypical networks need to train with a much higher \say{way} which is harder to learn, and might help the trained model to perform better on easier tasks with lower \say{way}. 
        For tiered-ImageNet, SImPa outperforms the current state-of-the-art in 1-shot setting, while being comparable in 5-shot setting. To obtain a fairer comparison, we re-run MAML on the image data of mini-ImageNet using a ResNet10, which has about 5 million parameters (ours has about 8 millions parameters). However, MAML, with and without L2 regularisation, over-fits the training data (our best result for MAML was 89\% accuracy on train, while only 42\% on test). This known issue of overfitting when using larger networks in MAML was mentioned in the MAML paper \cite[Section 5.2]{finn2017model}. We also try a similar model for ABML~\cite{ravi2018amortized}, but observed no improvement.
        
        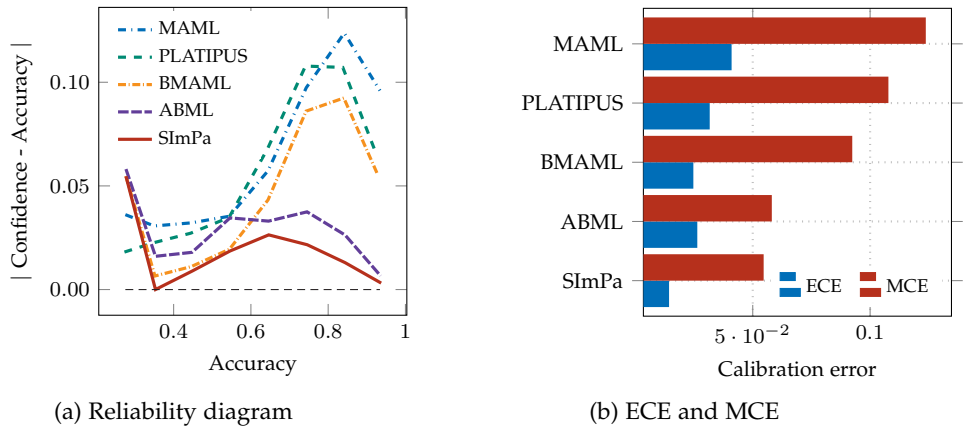
\begin{figure*}[t]
        	\centering
        	\hphantom{dummy}
        	\hfill
        	\begin{subfigure}[b]{0.25\linewidth}
        	\centering
                \begin{tikzpicture}
                    \pgfplotstableread[col sep=comma, row sep=\\, header=true]{
                        confidence,accuracy,numSamples\\
                        0.27493931,0.31107797,0.13140111\\
                        0.34954532,0.38023622,0.36855908\\
                        0.44475241,0.47691715,0.24892092\\
                        0.5442784,0.57964064,0.13027285\\
                        0.64410535,0.70135895,0.06720232\\
                        0.74340833,0.84081141,0.03654821\\
                        0.84173447,0.96546466,0.02335015\\
                        0.93409063,1.03007947,0.02097659\\
                    } \mamlClassification
                    
                    \pgfplotstableread[col sep=comma, row sep=\\, header=true]{
                        confidence,accuracy,numSamples\\
                        0.27255475,0.2906478,0.18670279\\
                        0.34773938,0.37031527,0.39461558\\
                        0.44396027,0.47114644,0.23120679\\
                        0.54323251,0.57800947,0.10861208\\
                        0.64243828,0.71008907,0.05087847\\
                        0.74126349,0.84906277,0.02814695\\
                        0.83788445,0.94503478,0.01860762\\
                        0.923999,0.9896672,0.0128\\
                    } \platipusClassification
                    
                    \pgfplotstableread[col sep=comma, row sep=\\, header=true]{
                        confidence,accuracy,numSamples\\
                        0.27632109,0.33106001,0.0964829\\
                        0.35149913,0.35805536,0.3403526\\
                        0.44571405,0.45681017,0.2701978\\
                        0.54448435,0.56395793,0.15184483\\
                        0.64430216,0.68753562,0.08081434\\
                        0.74289311,0.82895935,0.04503932\\
                        0.840585,0.9329809,0.02890521\\
                        0.92871464,0.98394853,0.02130892\\
                    } \bmamlClassification
                    
                    \pgfplotstableread[col sep=comma, row sep=\\, header=true]{
                        confidence,accuracy,numSamples\\
                        0.27651079,0.33462842,0.07142643\\
                        0.35297438,0.33691995,0.27481682\\
                        0.44742597,0.42946424,0.25751935\\
                        0.54599329,0.51141695,0.16891899\\
                        0.64558311,0.61256428,0.10440475\\
                        0.74534512,0.70785884,0.06621873\\
                        0.84392902,0.81761952,0.04498409\\
                        0.93574736,0.92911248,0.03420713\\
                    } \abmlClassification
                    
                    \pgfplotstableread[col sep=comma, row sep=\\, header=true]{
                        confidence,accuracy,numSamples\\
                        0.27652626,0.33116672,0.06932022\\
                        0.35298069,0.35300478,0.26910389\\
                        0.44737684,0.43850623,0.25530874\\
                        0.54606441,0.52744696,0.17051084\\
                        0.6460898,0.61970942,0.10934366\\
                        0.74569204,0.72406059,0.07059324\\
                        0.84384474,0.83085577,0.04588402\\
                        0.93721819,0.93408788,0.03123835\\
                    } \simpaClassification
                    
                    \begin{axis}[
                        height = 0.9 \linewidth,
                        width = 0.9 \linewidth,
                        xlabel={Accuracy},
                        xlabel style={font=\footnotesize},
                        xticklabel style = {font=\footnotesize},
                        ylabel={\(\vert\) Confidence - Accuracy \(\vert\)},
                        ylabel style = {font=\footnotesize, yshift=0em},
                        yticklabel style = {font=\footnotesize, /pgf/number format/.cd, fixed, fixed zerofill, precision=2},
                        scale only axis,
                        legend entries = {MAML, PLATIPUS, BMAML, ABML, SImPa},
                        legend style = {draw=none, font=\scriptsize, yshift=0.25em, /tikz/every even column/.append style={column sep=0.5em}},
                        legend cell align = {left},
                        legend columns = 1,
                        legend image post style={scale=0.75},
                        legend pos = north west
                    ]
                        \addplot[mark=none, NavyBlue, dashdotted, very thick] table[x = {confidence}, y expr = abs(\thisrow{confidence} - \thisrow{accuracy})]{\mamlClassification};
                        \addplot[mark=none, PineGreen, dashed, very thick] table[x = {confidence}, y expr = abs(\thisrow{confidence} - \thisrow{accuracy})]{\platipusClassification};
                        \addplot[mark=none, BurntOrange, densely dashdotted, very thick] table[x = {confidence}, y expr = abs(\thisrow{confidence} - \thisrow{accuracy})]{\bmamlClassification};
                        \addplot[mark=none, RoyalPurple, dash pattern=on 4pt off 1pt, very thick] table[x = {confidence}, y expr = abs(\thisrow{confidence} - \thisrow{accuracy})]{\abmlClassification};
                        \addplot[mark=none, BrickRed, solid, very thick] table[x = {confidence}, y expr = abs(\thisrow{confidence} - \thisrow{accuracy})]{\simpaClassification};
                        
                        \addplot[mark=none, Black, densely dashed] table[y expr = \thisrow{confidence} - \thisrow{confidence}]{\mamlClassification};
                    \end{axis}

                \end{tikzpicture}
        		\caption{Reliability diagram}
        		\label{fig:SImPa_classification_reliability_chart}
        	\end{subfigure}
        	\hfill
        	\begin{subfigure}[b]{0.25\linewidth}
        		\centering
                \begin{tikzpicture}
                    \pgfplotstableread[col sep=comma, row sep=\\, header=true]{
                        method,ECE,MCE\\
                        MAML,0.040983405802980155,0.12373018632600719\\
                        PLATIPUS,0.03166020220231428,0.10779927760892039\\
                        BMAML,0.024685853492362714,0.09239589684429828\\
                        ABML,0.026369340992729282,0.05811763206558751\\
                        SImPa,0.014338726042537404,0.05464046\\
                    } \ceTable
                    
                    \pgfplotstablegetrowsof{\ceTable}
                    \pgfmathsetmacro{\NumRows}{\pgfplotsretval-1}

                    \begin{axis}[
                        width = 0.9 \linewidth,
                        height = 0.9 \linewidth,
                        xbar=0pt,
                        ytick=data,
                        yticklabels from table={\ceTable}{method},
                        yticklabel style = {font=\footnotesize},
                        xticklabel style = {font=\footnotesize},
                        xlabel style = {font=\footnotesize},
                        xlabel = {Calibration error},
                        scale only axis,
                        enlarge x limits=auto,
                        enlarge y limits=0.15,
                        grid=major,
                        grid style={dotted, thick},
                        every axis plot/.append style={fill, draw=none},
                        legend entries = {ECE,MCE},
                        legend style = {draw=none, font=\scriptsize, yshift=0em, /tikz/every even column/.append style={column sep=0.5em}},
                        legend cell align = {left},
                        legend columns = 2,
                        legend image post style={scale=1},
                        legend pos = south east
                    ]
                        \addplot[mark=none, NavyBlue] table [y expr=\NumRows - \coordindex, x = {ECE}] {\ceTable};
                        \addplot[mark=none, BrickRed] table [y expr=\NumRows - \coordindex, x = {MCE}] {\ceTable};
                    \end{axis}
                \end{tikzpicture}
        		\caption{ECE and MCE}
        		\label{fig:SImPa_classification_MCE_ECE}
        	\end{subfigure}
            \hfill
            \hphantom{dummy}

        	\caption{Calibration of the \protect\say{standard} 4-block CNN trained with different meta-learning methods on 5-way 1-shot classification tasks on mini-ImageNet.}
        	\label{fig:classification_calibration}
        \end{figure*}
        
        Similarly to the experiment for regression, we use reliability diagrams~\cite{guo2017oncalibration} to evaluate the predictive uncertainty. For a fair comparison, we re-implement several probabilistic meta-learning approaches, including MAML~\cite{finn2017model}, PLATIPUS~\cite{finn2018probabilistic}, BMAML~\cite{yoon2018bayesian} and ABML~\cite{ravi2018amortized}, using the 4-block CNN as the base model, trained under the same setting, and plot their reliability chart. The performance curves in the reliability diagram show how well calibrated a model is when testing across many unseen tasks. A perfectly calibrated model will have its values overlapped with the identity function \(y=x\), indicating that the probability associated with the label prediction is the same as the true probability. To ease the visualisation, we normalise the reliability chart by subtracting the predicted accuracy by its corresponding value on the diagonal \(y=x\), as shown in \figureautorefname~\ref{fig:SImPa_classification_reliability_chart}. Hence, for the normalised reliability chart, the closer to \(y=0\), the better the calibration. Visually, the model trained with SImPa shows better calibration than the ones trained with other meta-learning methods. To further evaluate, we compute the expected calibration error (ECE) and maximum calibration error (MCE)~\cite{guo2017oncalibration} of the models trained with these methods. The results plotted in \figureautorefname~\ref{fig:SImPa_classification_MCE_ECE} show that the model trained with SImPa achieves the smallest ECE and MCE among all the methods considered in this comparison.  The most competitive method to SImPa, regarding ECE and MCE, is ABML, but note that ABML has a worse classification accuracy than SImPa, as shown in \tableautorefname~\ref{tab:classification_accuracies} \emph{(Top)} -- see row \say{ABML~\cite{ravi2018amortized}}.

    \subsection{Discussion}
        \red{As mentioned in Remark~\ref{rm:bounded_loss}, the loss function \(\ell\) is clipped to \([0, 1]\) to satisfy the assumption made in PAC-Bayes framework. We observed that it affects the training of SImPa, mostly at the early stages. The reason might be the imbalance between the clipped loss and the regularisation terms related to KL divergence, making the learning over-regularised with slow convergence. In the implementation, we first train SImPa without such regularisation terms for 1,000 tasks, and then add them back to the loss with such regularisation. This facilitates the training process, allowing the algorithm to converge faster.}

        \red{As specified in \subsectionautorefname~\ref{sec:implicit_distribution}, the usage of implicit distribution, and in particular the generator in \eqref{eq:generator} to generate the parameter for the neural network of interest, leads to an exponential increasing of the number of learnable parameters. For example, in the classification of mini-ImageNet using the 4-convolutional module neural network, the number of parameter of the base network is 32,645, requiring us to have a generator with 16.7 million parameters. Such a large generator limits the scalability of SImPa, making it inapplicable for larger base networks. One workaround solution might be to utilise the architecture design proposed in Hypernetworks~\citep{ha2017hypernetworks} that shares parameters to reduce the size of the generator. The trade-off is the slight decreasing of the generator expressiveness when generating the parameter for the base neural network.}

        \red{Another limitation of SImPa is the need of GPU memory and training time comparing to other meta-learning approaches, such as MAML. In our implementation, the simplest baseline MAML needs 6 GPU-hours to train until convergence, while probabilistic baselines, such as ABML, BMAML and PLATIPUS, take about 30 GPU-hours to converge under the same setting. For SImPa, it requires more than 48 hours to converge. The reason of such long training time lies on the size of the meta-parameter, the need to train the \(\phi\) network to estimate KL divergence, and the number of Monte Carlo samples used. Please refer to \appendixautorefname~\ref{sec:complexity} for a detailed analysis of the running time complexity between SImPa and other meta-learning algorithms.}

\section{Conclusion}
	We introduce and formulate a new probabilistic algorithm for few-shot meta-learning. The proposed algorithm, SImPa, is based on PAC-Bayes framework which theoretically guarantees error induced on unseen tasks and unseen samples within each task. In addition, the proposed method employs a generative approach that implicitly models the posterior of task-specific model parameter \(q(\mathbf{w}_{i}; \lambda_{i})\), resulting in more expressive variational approximation compared to the usual variational methods using multivariate normal posteriors with diagonal covariance matrices, such as PLATIPUS~\cite{finn2018probabilistic} or ABML~\cite{ravi2018amortized}. The uncertainty, in the form of the learnt implicit distributions, can introduce more variability into the decision made by the model, resulting in well-calibrated and highly-accurate prediction. The algorithm can be combined with different base models that are trainable with gradient-based optimisation, and is applicable in regression and classification. We demonstrate that the algorithm has state-of-the-art calibration and prediction results on unseen data in a multi-modal 5-shot learning regression problem, and achieve state-of-the-art calibration and classification results on few-shot 5-way tasks on mini-ImageNet and tiered-ImageNet data sets.


%



\ifCLASSOPTIONcompsoc
  \section*{Acknowledgments}
\else
  \section*{Acknowledgment}
\fi

This work was supported by Australian Research Council grants CE140100016 and FT190100525. We also thank the Phoenix High Performance Computing service at the University of Adelaide to provide super-computing resources for this work.

\printbibliography

\ifCLASSOPTIONcaptionsoff
  \newpage
\fi

\onecolumn
\appendices
    \section{Proof of PAC-Bayes few-shot meta-learning bound}
\label{apdx:pac_bayes_proof}
    The derivation is divided into three steps. The first two steps are to derive the PAC-Bayes bound for the generalisation errors induced by the unseen tasks, and the unseen queried examples within each task. The novel bound is then constructed by combining the results obtained in the first two steps and presented in Theorem~\ref{theorem:pac_bayes_meta_learning}.

    \subsection{PAC-Bayes upper-bound for unseen validation of a single task}
    \label{sec:pac_bayes_bound_unseen_samples_single_task}
        This subsection presents a PAC-Bayes upper-bound for \(\mathbb{E}_{q(\theta; \psi)} \mathbb{E}_{\left( \mathcal{D}_{i}^{(v)}, f_{i} \right)} \mathbb{E}_{q(\mathbf{w}_{i}; \lambda_{i}^{*})} \left[ \ell \left(  \mathbf{x}_{ij}^{(v)}, y_{ij}^{(v)}; \mathbf{w}_{i} \right) \right]\) for task \(\mathcal{T}_{i}\). Note that this loss is different from the left hand-side term of Theorem~\ref{theorem:pac_bayes_single_task} (presented in \subsectionautorefname~\ref{sec:pac_bayes_single_task}) at the expectation over the posterior \(q(\theta; \psi)\).

        \begin{lemma}
        \label{lemma:pac_bayes_single_task}
            If \(S_{i}^{(v)}\) consists of \(m_{i}^{(v)}\) input \(\mathbf{x}_{ik}^{(v)}\) sampled from a data probability distribution \(\mathcal{D}_{i}\) and labelled by \(f_{i}\), \(\mathcal{H}\) is a hypothesis class with each hypothesis \(h\) parameterised by \(\mathbf{w}_{i}\), \(\ell: \mathcal{H} \times \mathcal{Y} \to [0, 1]\) is a loss function, \(q(\mathbf{w}_{i}; \lambda_{i}^{*})\) is a \say{posterior} over the hypothesis parameter and \(p(\mathbf{w}_{i})\) is a prior, then the following holds:
            \begin{equation*}
                \begin{aligned}[b]
                    \mathrm{Pr} \left( \mathbb{E}_{q(\theta; \psi)} \mathbb{E}_{\left( \mathcal{D}_{i}^{(v)}, f_{i} \right)} \mathbb{E}_{q(\mathbf{w}_{i}; \lambda_{i}^{*})} \left[ \ell \left(  \mathbf{x}_{ij}^{(v)}, y_{ij}^{(v)}; \mathbf{w}_{i} \right) \right] \le \frac{1}{m_{i}^{(v)}} \sum_{k=1}^{m_{i}^{(v)}} \mathbb{E}_{q(\theta; \psi)} \mathbb{E}_{q(\mathbf{w}_{i}; \lambda_{i}^{*})} \left[ \ell \left(  \mathbf{x}_{ik}^{(v)}, y_{ik}^{(v)}; \mathbf{w}_{i} \right) \right] + R_{i} \right) \ge 1 - \varepsilon_{i},
                \end{aligned}
            \end{equation*}
            where: \(\varepsilon_{i} \in (0, 1]\), and
            \begin{equation}
                R_{i} = \sqrt{\frac{\mathbb{E}_{q(\theta; \psi)} \left[ \mathrm{KL} \left[ q(\mathbf{w}_{i}; \lambda_{i}^{*}) || p(\mathbf{w}_{i}) \right] \right] + \frac{\ln m_{i}^{(v)}}{\varepsilon_{i}}}{2(m_{i}^{(v)} - 1}}
                \label{eq:ri}
            \end{equation}
        \end{lemma}
        
        \begin{proof}
            To simplify the proof, let \(\Delta \mathsf{L}\) is the difference between the true loss and empirical loss:
            \begin{equation*}
                \Delta \mathsf{L} = \mathbb{E}_{\left( \mathcal{D}_{i}^{(v)}, f_{i} \right)} \left[ \ell \left(  \mathbf{x}_{ij}^{(v)}, y_{ij}^{(v)}; \mathbf{w}_{i} \right) \right] - \frac{1}{m_{i}^{(v)}} \sum_{k=1}^{m_{i}^{(v)}} \ell \left(  \mathbf{x}_{ik}^{(v)}, y_{ik}^{(v)}; \mathbf{w}_{i} \right).
            \end{equation*}
            Given the Fubini's theorem to interchange the expectations, the problem can be written as:
            \begin{equation}
                \mathrm{Pr} \left( \mathbb{E}_{q(\theta; \psi)} \mathbb{E}_{q(\mathbf{w}_{i}; \lambda_{i}^{*})} \left[ \Delta \mathsf{L} \right] \le \sqrt{\frac{\mathbb{E}_{q(\theta; \psi)} \left[ \mathrm{KL} \left[ q(\mathbf{w}_{i}; \lambda_{i}^{*}) || p(\mathbf{w}_{i}) \right] \right] + \frac{\ln m_{i}^{(v)}}{\varepsilon_{i}}}{2(m_{i}^{(v)} - 1}} \, \right) \ge 1 - \varepsilon_{i}.
            \end{equation}

            Applying Lemma~\ref{lmm:compression} (presented in \appendixautorefname~\ref{apdx:auxiliary_lemmas}) with \(2 (m_{i}^{(v)} - 1) \Delta \mathsf{L}^{2}\) as \(\phi(h)\) gives:
            \begin{equation}
                2 \left( m_{i}^{(v)} - 1 \right) \mathbb{E}_{q(\mathbf{w}_{i}; \lambda_{i}^{*})} \left[ \mathsf{L}^{2} \right] \le \mathrm{KL} \left[ q(\mathbf{w}_{i}; \lambda_{i}^{*}) || p(\mathbf{w}_{i}) \right] + \ln \mathbb{E}_{p(\mathbf{w}_{i})} \left[ e^{ 2 \left( m_{i}^{(v)} - 1 \right) \Delta \mathsf{L}^{2} } \right].
                \label{eq:corolary_compression_lemma}
            \end{equation}
            
            To prove Lemma~\ref{lemma:pac_bayes_single_task}, the left-hand side term in \eqref{eq:corolary_compression_lemma} is lower-bounded, while the last term of the right-hand side in \eqref{eq:corolary_compression_lemma} is upper-bounded.
            
            To lower-bound the left-hand side term, Jensen's inequality is applied on the convex function \(x^{2}\) to obtain:
            \begin{equation}
                2 \left( m_{i}^{(v)} - 1 \right) \left( \mathbb{E}_{q(\mathbf{w}_{i}; \lambda_{i}^{*})} \left[ \Delta \mathsf{L} \right] \right)^{2} \le 2 \left( m_{i}^{(v)} - 1 \right) \mathbb{E}_{q(\mathbf{w}_{i}; \lambda_{i}^{*})} \left[ \mathsf{L}^{2} \right].
                \label{eq:jensen_quadratic}
            \end{equation}
            The results in \eqref{eq:corolary_compression_lemma} and \eqref{eq:jensen_quadratic} lead to:
            \begin{equation}
                \sqrt{2 \left( m_{i}^{(v)} - 1 \right)} \mathbb{E}_{q(\mathbf{w}_{i}; \lambda_{i}^{*})} \left[ \Delta \mathsf{L} \right] \le \sqrt{ \mathrm{KL} \left[ q(\mathbf{w}_{i}; \lambda_{i}^{*}) || p(\mathbf{w}_{i}) \right] + \ln \mathbb{E}_{p(\mathbf{w}_{i})} \left[ e^{ 2 \left( m_{i}^{(v)} - 1 \right) \Delta \mathsf{L}^{2} } \right] }.
            \end{equation}
            
            Taking expectation over \(q(\theta; \psi)\) on both sides gives:
            \begin{equation}
                \sqrt{2 \left( m_{i}^{(v)} - 1 \right)} \mathbb{E}_{q(\theta; \psi)} \mathbb{E}_{q(\mathbf{w}_{i}; \lambda_{i}^{*})} \left[ \Delta \mathsf{L} \right] \le \mathbb{E}_{q(\theta; \psi)} \left[ \, \sqrt{ \mathrm{KL} \left[ q(\mathbf{w}_{i}; \lambda_{i}^{*}) || p(\mathbf{w}_{i}) \right] + \ln \mathbb{E}_{p(\mathbf{w}_{i})} \left[ e^{ 2 \left( m_{i}^{(v)} - 1 \right) \Delta \mathsf{L}^{2} } \right] } \, \right].
            \end{equation}

            Note that \(\sqrt{x}\) is a concave function. Hence, one can apply Jensen's inequality on the right-hand side term to obtain an upper-bound. This results in:
            \begin{equation}
                \sqrt{ 2 \left( m_{i}^{(v)} - 1 \right) } \mathbb{E}_{q(\theta; \psi)} \mathbb{E}_{q(\mathbf{w}_{i}; \lambda_{i}^{*})} \left[ \Delta \mathsf{L} \right] \le \sqrt{ \mathbb{E}_{q(\theta; \psi)} \left[ \mathrm{KL} \left[ q(\mathbf{w}_{i}; \lambda_{i}^{*}) || p(\mathbf{w}_{i}) \right] \right] + \mathbb{E}_{q(\theta; \psi)} \left[ \ln \mathbb{E}_{p(\mathbf{w}_{i})} \left[ e^{ 2 \left( m_{i}^{(v)} - 1 \right) \Delta \mathsf{L}^{2} } \right] \right] }.
                \label{eq:lower_bound_lhs}
            \end{equation}
            
            To upper-bound the last term in \eqref{eq:corolary_compression_lemma}, Lemma~\ref{lmm:exercise311} (presented in \appendixautorefname~\ref{apdx:auxiliary_lemmas}) is then used. To do that, \(\Delta \mathsf{L}\) is required to satisfied the assumption of Lemma~\ref{lmm:exercise311}. This can be done by applying the Hoeffding's inequality.

            Consider each loss value \(\ell(\mathbf{x}_{ik}^{(v)}, y_{ik}; \mathbf{w}_{i}) \in [0, 1]\) as an i.i.d. random variable with true mean \(\mathbb{E}_{\left( \mathcal{D}_{i}^{(v)}, f_{i} \right)} \left[ \ell \left(  \mathbf{x}_{ij}^{(v)}, y_{ij}^{(v)}; \mathbf{w}_{i} \right) \right]\) and empirical mean \(\frac{1}{m_{i}^{(v)}} \sum_{k=1}^{m_{i}^{(v)}} \ell \left(  \mathbf{x}_{ik}^{(v)}, y_{ik}^{(v)}; \mathbf{w}_{i} \right)\). Hence, applying Hoeffding's inequality gives:
            \begin{equation}
                \mathrm{Pr} \left( \left| \Delta L \right| \ge \epsilon \right) \le e^{-2 m_{i}^{(v)} \epsilon^{2}}, \forall \epsilon \ge 0
            \end{equation}
            
            Therefore, this allows to apply Lemma~\ref{lmm:exercise311} to upper-bound the last term in the right-hand side of \eqref{eq:corolary_compression_lemma}:
            \begin{equation}
                \mathbb{E}_{(\mathcal{D}_{i}^{m_{i}^{(v)}}, f)} \left[ e^{ 2 \left( m_{i}^{(v)} - 1 \right) \Delta \mathsf{L}^{2} } \right] \le m_{i}^{(v)}.
            \end{equation}
            
            Taking the expectation w.r.t. the distribution \(q(\mathbf{w}_{i}; \lambda_{i}^{*})\)on both sides gives:
            \begin{equation}
                \mathbb{E}_{(\mathcal{D}_{i}^{m_{i}^{(v)}}, f)} \mathbb{E}_{q(\mathbf{w}_{i}; \lambda_{i}^{*})} \left[ e^{ 2 \left( m_{i}^{(v)} - 1 \right) \Delta \mathsf{L}^{2} } \right] \le m_{i}^{(v)}.
            \end{equation}
            Note that the two expectations on the left-hand side term are interchanged due to Fubini's theorem.
            
            Taking the logarithm on both side, and applying Jensen's inequality to lower-bound the left-hand side term give:
            \begin{equation}
                \mathbb{E}_{(\mathcal{D}_{i}^{m_{i}^{(v)}}, f)} \ln \mathbb{E}_{q(\mathbf{w}_{i}; \lambda_{i}^{*})} \left[ e^{ 2 \left( m_{i}^{(v)} - 1 \right) \Delta \mathsf{L}^{2} } \right] \le \ln \mathbb{E}_{(\mathcal{D}_{i}^{m_{i}^{(v)}}, f)} \mathbb{E}_{q(\mathbf{w}_{i}; \lambda_{i}^{*})} \left[ e^{ 2 \left( m_{i}^{(v)} - 1 \right) \Delta \mathsf{L}^{2} } \right] \le \ln m_{i}^{(v)}.
            \end{equation}
            Taking the expectation over the distribution \(q(\theta; \psi)\) on both sides and applying Fubini's theorem to interchange the two expectations on the left-hand side give:
            \begin{equation}
                \mathbb{E}_{(\mathcal{D}_{i}^{m_{i}^{(v)}}, f)} \mathbb{E}_{q(\theta; \psi)} \ln \mathbb{E}_{q(\mathbf{w}_{i}; \lambda_{i}^{*})} \left[ e^{ 2 \left( m_{i}^{(v)} - 1 \right) \Delta \mathsf{L}^{2} } \right] \le \ln m_{i}^{(v)}.
            \end{equation}
            The lower-bound (or the term at the left-hand side of the above inequality) can be lower-bounded further by applying Markov's inequality:
            \begin{equation}
                \mathrm{Pr} \left( \mathbb{E}_{q(\theta; \psi)} \ln \mathbb{E}_{q(\mathbf{w}_{i}; \lambda_{i}^{*})} \left[ e^{ 2 \left( m_{i}^{(v)} - 1 \right) \Delta \mathsf{L}^{2} } \right] \ge \epsilon \right) \le \frac{ \mathbb{E}_{(\mathcal{D}_{i}^{m_{i}^{(v)}}, f)} \mathbb{E}_{q(\theta; \psi)} \ln \mathbb{E}_{q(\mathbf{w}_{i}; \lambda_{i}^{*})} \left[ e^{ 2 \left( m_{i}^{(v)} - 1 \right) \Delta \mathsf{L}^{2} } \right] }{ \epsilon }, \forall \epsilon > 0.
            \end{equation}

            This implies that:
            \begin{equation}
                \mathrm{Pr} \left( \mathbb{E}_{q(\theta; \psi)} \ln \mathbb{E}_{q(\mathbf{w}_{i}; \lambda_{i}^{*})} \left[ e^{ 2 \left( m_{i}^{(v)} - 1 \right) \Delta \mathsf{L}^{2} } \right] \ge \epsilon \right) \le \frac{ \ln m_{i}^{(v)} }{ \epsilon }, \forall \epsilon > 0.
            \end{equation}
            
            Equivalently, one can write the above inequality as:
            \begin{equation}
                \mathrm{Pr} \left( \mathbb{E}_{q(\theta; \psi)} \ln \mathbb{E}_{q(\mathbf{w}_{i}; \lambda_{i}^{*})} \left[ e^{ 2 \left( m_{i}^{(v)} - 1 \right) \Delta \mathsf{L}^{2} } \right] \le \epsilon \right) \ge 1 - \frac{ \ln m_{i}^{(v)} }{ \epsilon }, \forall \epsilon > 0.
            \end{equation}
            
            Hence, adding an expectation of a KL divergence on both sides of the inequality inside the probability function and taking square root gives:
            \begin{equation}
                \begin{aligned}[b]
                    & \mathrm{Pr} \left( \sqrt{ \mathbb{E}_{q(\theta; \psi)} \left[ \mathrm{KL} \left[ q(\mathbf{w}_{i}; \lambda_{i}^{*}) || p(\mathbf{w}_{i}) \right] \right] + \mathbb{E}_{q(\theta; \psi)} \left[ \ln \mathbb{E}_{p(\mathbf{w}_{i})} \left[ e^{ 2 \left( m_{i}^{(v)} - 1 \right) \Delta \mathsf{L}^{2} } \right] \right] } \le \right.\\
                    & \qquad \left. \le \sqrt{ \mathbb{E}_{q(\theta; \psi)} \left[ \mathrm{KL} \left[ q(\mathbf{w}_{i}; \lambda_{i}^{*}) || p(\mathbf{w}_{i}) \right] \right] + \epsilon } \, \vphantom{\frac{1}{2}} \right) \ge 1 - \frac{\ln m_{i}^{(v)}}{\epsilon}, \forall \epsilon > 0.
                \end{aligned}
                \label{eq:lemma2_28}
            \end{equation}
            The results in \eqref{eq:lower_bound_lhs} and \eqref{eq:lemma2_28} lead to:
            \begin{equation}
                \mathrm{Pr} \left( \sqrt{ 2 \left( m_{i}^{(v)} - 1 \right) } \mathbb{E}_{q(\theta; \psi)} \mathbb{E}_{q(\mathbf{w}_{i}; \lambda_{i}^{*})} \left[ \Delta \mathsf{L} \right] \le \sqrt{ \mathbb{E}_{q(\theta; \psi)} \left[ \mathrm{KL} \left[ q(\mathbf{w}_{i}; \lambda_{i}^{*}) || p(\mathbf{w}_{i}) \right] \right] + \epsilon } \right) \ge 1 - \frac{\ln m_{i}^{(v)}}{\epsilon}, \forall \epsilon > 0.
            \end{equation}
            
            Setting \(\varepsilon_{i} = \frac{\ln m_{i}^{(v)}}{\epsilon}\) and dividing both sides of the inequality inside the probability function by \(\sqrt{ 2 \left( m_{i}^{(v)} - 1 \right) }\) give:
            \begin{equation}
                \mathrm{Pr} \left( \mathbb{E}_{q(\theta; \psi)} \mathbb{E}_{q(\mathbf{w}_{i}; \lambda_{i}^{*})} \left[ \Delta \mathsf{L} \right] \le \sqrt{ \frac{ \mathbb{E}_{q(\theta; \psi)} \left[ \mathrm{KL} \left[ q(\mathbf{w}_{i}; \lambda_{i}^{*}) || p(\mathbf{w}_{i}) \right] \right] + \frac{\ln m_{i}^{(v)}}{\varepsilon_{i}} }{2 \left( m_{i}^{(v)} - 1 \right)} }\right) \ge 1 - \varepsilon_{i}.
            \end{equation}
        \end{proof}

    \subsection{PAC-Bayes upper-bound for unseen tasks}
    \label{sec:pac_bayes_bound_unseen_tasks}
        The PAC-Bayes upper-bound on the generalisation loss for unseen tasks can be obtained as a corollary of Theorem~\ref{theorem:pac_bayes_single_task} (presented in \subsectionautorefname~\ref{sec:pac_bayes_single_task}). In particular, if:
        \begin{itemize}
            \item the loss function is defined as \(\mathbb{E}_{q(\mathbf{w}_{i}; \lambda_{i}^{*})} \mathbb{E}_{\left( \mathcal{D}_{i}^{(v)}, f_{i} \right)} \left[ \ell \left(  \mathbf{x}_{ij}^{(v)}, y_{ij}^{(v)}; \mathbf{w}_{i} \right) \right]\),
            \item data generation is the task environment \(p(\mathcal{D}, f)\),
            \item dataset consists of \(T > 1\) tasks i.i.d. sampled from the task environment,
            \item hypothesis is \(\theta\),
            \item posterior is \(q(\theta; \psi)\),
            \item and prior is \(p(\theta)\),
        \end{itemize}
        then one can apply Theorem~\ref{theorem:pac_bayes_single_task} to obtain a PAC-Bayes upper-bound for unseen tasks as shown in Corollary~\ref{corollary:pac_bayes_unseen_tasks}.
        
        \begin{corollary}
        \label{corollary:pac_bayes_unseen_tasks}
            \begin{equation*}
                \begin{aligned}[b]
                    & \mathrm{Pr} \left( \mathbb{E}_{q(\theta; \psi)} \mathbb{E}_{p(\mathcal{D}, f)} \mathbb{E}_{q(\mathbf{w}_{i}; \lambda_{i}^{*})} \mathbb{E}_{\left( \mathcal{D}_{i}^{(v)}, f_{i} \right)} \left[ \ell \left(  \mathbf{x}_{ij}^{(v)}, y_{ij}^{(v)}; \mathbf{w}_{i} \right) \right] \le \right.\\
                    & \qquad \left. \le \frac{1}{T} \sum_{i = 1}^{T} \mathbb{E}_{q(\theta; \psi)} \mathbb{E}_{q(\mathbf{w}_{i}; \lambda_{i}^{*})} \mathbb{E}_{\left( \mathcal{D}_{i}^{(v)}, f_{i} \right)} \left[ \ell \left(  \mathbf{x}_{ij}^{(v)}, y_{ij}^{(v)}; \mathbf{w}_{i} \right) \right] + R_{0} \right) \ge 1 - \varepsilon_{0},
                \end{aligned}
            \end{equation*}
            where: \(\varepsilon_{0} \in (0, 1]\), and
            \begin{equation}
                R_{0} = \sqrt{ \frac{ \mathrm{KL} \left[ q(\theta; \psi) || p(\theta) \right] + \frac{\ln T}{\varepsilon_{0}} }{2(T - 1)} }.
                \label{eq:r0}
            \end{equation}
        \end{corollary}

    \subsection{PAC-Bayes upper-bound for meta-learning}
    \label{sec:pac_bayes_meta_learning}
        \MetaBound*
        \begin{proof}
            First, the upper-bound for the unseen examples of a single-task obtained from Lemma~\ref{lemma:pac_bayes_single_task} (presented in \appendixautorefname~\ref{sec:pac_bayes_bound_unseen_samples_single_task}) is extended for \(T\) training tasks by using Lemma~\ref{lemma:union_bound} (presented in \appendixautorefname~\ref{apdx:auxiliary_lemmas}) with the following substitution:
            \begin{itemize}
                \item \(X_{i} := \mathbb{E}_{q(\theta; \psi)} \mathbb{E}_{\left( \mathcal{D}_{i}^{(v)}, f_{i} \right)} \mathbb{E}_{q(\mathbf{w}_{i}; \lambda_{i}^{*})} \left[ \ell \left(  \mathbf{x}_{ij}^{(v)}, y_{ij}^{(v)}; \mathbf{w}_{i} \right) \right]\)
                \item \(Y_{i} := \frac{1}{m_{i}^{(v)}} \sum_{k=1}^{m_{i}^{(v)}} \mathbb{E}_{q(\theta; \psi)} \mathbb{E}_{q(\mathbf{w}_{i}; \lambda_{i}^{*})} \left[ \ell \left(  \mathbf{x}_{ik}^{(v)}, y_{ik}^{(v)}; \mathbf{w}_{i} \right) \right] + R_{i}\)
            \end{itemize}
            to obtain:
            \begin{equation}
                \begin{aligned}[b]
                    & \mathrm{Pr} \left( \frac{1}{T} \sum_{i = 1}^{T} \mathbb{E}_{q(\theta; \psi)} \mathbb{E}_{\left( \mathcal{D}_{i}^{(v)}, f_{i} \right)} \mathbb{E}_{q(\mathbf{w}_{i}; \lambda_{i}^{*})} \left[ \ell \left(  \mathbf{x}_{ij}^{(v)}, y_{ij}^{(v)}; \mathbf{w}_{i} \right) \right] \le \right. \\
                    & \qquad \left. \le \frac{1}{T} \sum_{i = 1}^{T} \frac{1}{m_{i}^{(v)}} \sum_{k=1}^{m_{i}^{(v)}} \mathbb{E}_{q(\theta; \psi)} \mathbb{E}_{q(\mathbf{w}_{i}; \lambda_{i}^{*})} \left[ \ell \left(  \mathbf{x}_{ik}^{(v)}, y_{ik}^{(v)}; \mathbf{w}_{i} \right) \right] + R_{i} \right) \ge 1 - \sum_{i=1}^{T} \varepsilon_{i}.
                \end{aligned}
                \label{eq:pac_bayes_many_tasks}
            \end{equation}

            Given Corollary~\ref{corollary:pac_bayes_unseen_tasks} (presented in \appendixautorefname~\ref{sec:pac_bayes_bound_unseen_tasks}) and the result in \eqref{eq:pac_bayes_many_tasks}, one can apply Corollary~\ref{crll:commutative_bound} (presented in \appendixautorefname~\ref{apdx:auxiliary_lemmas}) to obtain the following:
            \begin{equation}
                \begin{aligned}[b]
                    & \mathrm{Pr} \left( \mathbb{E}_{q(\theta; \psi)} \mathbb{E}_{(\mathcal{D}, f)} \mathbb{E}_{\left( \mathcal{D}_{i}^{(v)}, f_{i} \right)} \mathbb{E}_{q(\mathbf{w}_{i}; \lambda_{i}^{*})} \left[ \ell \left(  \mathbf{x}_{ij}^{(v)}, y_{ij}^{(v)}; \mathbf{w}_{i} \right) \right] \le \right.\\
                    & \qquad \left. \le \frac{1}{T} \sum_{i = 1}^{T} \frac{1}{m_{i}^{(v)}} \sum_{k=1}^{m_{i}^{(v)}} \mathbb{E}_{q(\theta; \psi)} \mathbb{E}_{q(\mathbf{w}_{i}; \lambda_{i}^{*})} \left[ \ell \left(  \mathbf{x}_{ik}^{(v)}, y_{ik}^{(v)}; \mathbf{w}_{i} \right) \right] + R_{i} + R_{0} \right) \ge 1 - \sum_{j=0}^{T} \varepsilon_{j}.
                \end{aligned}
            \end{equation}
            Setting \(\varepsilon_{0} = \frac{\varepsilon}{T}\) and \(\varepsilon_{i} = \frac{(T - 1) \varepsilon}{T^{2}}, \forall i \in \{1, \ldots, T\}\) completes the proof.
        \end{proof}

    \newpage
    \section{Auxiliary lemmas}
\label{apdx:auxiliary_lemmas}
    
    \KLLowerBound*
    \begin{proof}
        For a measurable function \(\phi(h)\):
        \begin{equation}
            \mathbb{E}_{Q} \left[ \phi(h) \right] = \mathbb{E}_{Q}\left[ \ln \left( e^{\phi(h)} \frac{Q(h)}{P(h)} \frac{P(h)}{Q(h)} \right) \right] = \mathrm{KL} \left[ Q || P \right] + \mathbb{E}_{Q} \left[ \ln \left( e^{\phi(h)} \frac{P(h)}{Q(h)} \right) \right].
        \end{equation}
        Applying Jensen's inequality on the last term gives:
        \begin{equation}
            \mathbb{E}_{Q} \left[ \phi(h) \right] \le \mathrm{KL} \left[ Q || P \right] + \ln \mathbb{E}_{Q} \left[ e^{\phi(h)} \frac{P(h)}{Q(h)} \right] = \mathrm{KL} \left[ Q || P \right] + \ln \mathbb{E}_{P} \left[ e^{\phi(h)} \right].
        \end{equation}
        Re-arrange the term proves the first part of the lemma, which is the lower-bound of the KL divergence.

        To prove the second part of the lemma, we simply show that there exists a function \(\phi(h)\) that makes the lower-bound achieve the maximal value which is the KL divergence. Let:
        \begin{equation}
            \phi(h) = \ln \frac{Q(h)}{P(h)},
        \end{equation}
        then the lower-bound can be expressed as:
        \begin{equation}
            \begin{aligned}[b]
                \mathbb{E}_{Q} \left[ \phi(h) \right] - \ln \mathbb{E}_{P} \left[ e^{\phi(h)} \right] & = \mathrm{KL} \left[ Q || P \right] - \ln \mathbb{E}_{P} \left[ \frac{Q(h)}{P(h)} \right] = \mathrm{KL} \left[ Q || P \right].
            \end{aligned}
        \end{equation}
        That completes the proof.
    \end{proof}
    
    \begin{lemma}[Exercise 31.1 in \citep{shalev2014understanding}]
    \label{lmm:exercise311}
        Let \(X\) be a non-negative random variable that satisfies: \(\mathrm{Pr} \left( X \ge \epsilon \right) \le e^{-2m\epsilon^{2}}, \forall \epsilon \ge 0\). Prove that: \(\mathbb{E} \left[ e^{2(m - 1) X^{2}} \right] \le m\).
    \end{lemma}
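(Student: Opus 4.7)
The plan is to exploit the standard tail-integral identity for non-negative random variables, namely $\mathbb{E}[Y] = \int_0^\infty \Pr(Y \ge y)\, dy$, applied to $Y := e^{2(m-1)X^{2}}$. Since $X \ge 0$ (and $m \ge 1$), we have $Y \ge 1$ almost surely, so the tail probability $\Pr(Y \ge y)$ equals $1$ on $[0,1]$ and contributes exactly $1$ to the integral. All the work lies in bounding the contribution on $(1, \infty)$.

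For $y > 1$, I would invert the event: $e^{2(m-1)X^{2}} \ge y$ is equivalent (using $X \ge 0$) to $X \ge \sqrt{\ln y / (2(m-1))}$. Substituting this threshold into the hypothesis $\Pr(X \ge \epsilon) \le e^{-2m\epsilon^{2}}$ gives the polynomial tail bound $\Pr(Y \ge y) \le \exp(-m \ln(y)/(m-1)) = y^{-m/(m-1)}$. This is the key algebraic step: the Gaussian-type tail on $X$ turns into a polynomial tail on $Y$ of the precise order that makes the remaining integral converge.

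Next I would evaluate $\int_{1}^{\infty} y^{-m/(m-1)}\, dy$. The exponent is $-1 - 1/(m-1)$, which is strictly less than $-1$ for all $m > 1$, so the integral converges to $(m-1)$. Adding the $1$ from the $[0,1]$ piece yields $\mathbb{E}[Y] \le 1 + (m-1) = m$, completing the bound. For the degenerate case $m = 1$, the left-hand side trivially equals $1 \le 1 = m$.

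The main obstacle is essentially bookkeeping rather than conceptual: one has to match the exponent $2(m-1)$ inside the exponential with the hypothesis factor $2m$ so that the induced polynomial decay is just barely integrable. The factor $m-1$ (rather than $m$) in the lemma's exponent is precisely what keeps the tail summable and yields the clean constant $m$ on the right-hand side; a naive weakening to $2m X^{2}$ would produce a divergent integral. This tightness explains why the lemma is stated with $2(m-1)$ and serves as a useful sanity check during the computation.
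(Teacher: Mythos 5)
Your proof is correct and conceptually the same as the paper's: both apply the tail-integral identity to $Y = e^{2(m-1)X^2}$ and bound the tail using the hypothesis on $X$. The only real difference is where the change of variables lives—you invert the event to obtain the polynomial tail $\Pr(Y \ge y) \le y^{-m/(m-1)}$ and integrate directly in $y$, whereas the paper substitutes $x = e^{2(m-1)\epsilon^2}$ and evaluates the resulting integral $\int_0^\infty 4(m-1)\epsilon\, e^{-2\epsilon^2}\,d\epsilon$ in the original variable $\epsilon$; these are the same computation in different coordinates. Your explicit split of $[0,1]$ (contributing exactly $1$) from $(1,\infty)$ (contributing $m-1$) is actually the more careful version of the bookkeeping: the paper's write-up asserts $\mathbb{E}[Y] = \int_1^\infty \Pr(Y \ge t)\,dt$, which for $Y \ge 1$ in fact equals $\mathbb{E}[Y]-1$, and then states $\int_0^\infty 4(m-1)\epsilon\, e^{-2\epsilon^2}\,d\epsilon = m$, which in fact evaluates to $m-1$; the two off-by-one slips cancel so the conclusion $\mathbb{E}[Y] \le m$ survives, but your decomposition gets each constant right on its own, and you also cover the degenerate $m=1$ case that the paper leaves implicit.
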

    
    \begin{proof}
        We will present the expectation of interest in term of probability of \(X\). For simplicity, let \(Y = e^{2(m - 1) X^{2}}\). Since \(X^{2} \in [0, +\infty)\), then \(Y \in [1, +\infty)\). According to the layer cake representation~\citep[Page 26]{lieb2001analysis}:
        \begin{equation*}
            Y = \int_{0}^{Y} dt = \int_{1}^{+ \infty} \mathbbm{1} \left( Y \ge t \right) dt,
        \end{equation*}
        where \(\mathbbm{1}(A)\) is the indicator function of event \(A\).
        
        One important property of indicator function is that:
        \begin{equation*}
            \mathbb{E} \left[ \mathbbm{1} \left( Y \ge t \right) \right] = \mathrm{Pr} \left( Y \ge t \right).
        \end{equation*}.
        
        With the above representation, we can express the expectation of interest as:
        \begin{equation*}
            \begin{aligned}[b]
                \mathbb{E} \left[ Y \right] & = \mathbb{E} \left[ \int_{1}^{+ \infty} \mathbbm{1} \left( Y \ge t \right) dt \right] \\
                & = \int_{1}^{+ \infty} \mathbb{E} \left[\mathbbm{1} \left( Y \ge t \right) \right] dt \quad \text{(Fubini's theorem)}\\
                & = \int_{1}^{+\infty} \mathrm{Pr} \left( Y \ge t \right) dt.
            \end{aligned}
        \end{equation*}
        Or:
        \begin{equation*}
            \mathbb{E} \left[ e^{2(m - 1) X^{2}} \right] = \int_{1}^{+ \infty} \mathrm{Pr} \left( e^{2(m - 1) X^{2}} \ge x \right) dx.
        \end{equation*}
        We will change the variable from \(x\) to \(\epsilon\) to utilise the given inequality. Let:
        \begin{equation*}
            x = e^{2(m - 1) \epsilon^{2}},
        \end{equation*}
        and since \(\epsilon \ge 0\), then:
        \begin{equation*}
            \epsilon = \sqrt{\frac{\ln x}{2(m - 1)}},
        \end{equation*}
        and
        \begin{equation*}
            dx = 4(m - 1) \epsilon e^{2(m - 1) \epsilon^{2}} d\epsilon.
        \end{equation*}
        Hence, the expectation of interest can be written in term of the changed variable \(\epsilon\) as:
        \begin{equation*}
            \begin{aligned}[b]
                \mathbb{E} \left[e^{2(m - 1) X^{2}} \right] & = \int_{0}^{+\infty} \mathrm{Pr} \left( e^{2(m - 1) X^{2}} \ge e^{2(m - 1) \epsilon^{2}} \right) 4(m - 1) \epsilon e^{2(m - 1) \epsilon^{2}} d\epsilon \\
                & = \int_{0}^{+\infty} \underbrace{\mathrm{Pr} \left( X \ge \epsilon \right)}_{\le e^{-2m\epsilon^{2}}} 4(m - 1) \epsilon e^{2(m - 1) \epsilon^{2}} d\epsilon \\
                & \le \int_{0}^{+\infty} 4(m - 1) \epsilon e^{-2 \epsilon^{2}} d\epsilon = m - 1 < m.
            \end{aligned}
        \end{equation*}
    \end{proof}

    \begin{lemma}
    \label{lmm:probability_inequality_sum}
        For \(i=1:n\), if \(X_{i}\) and \(Y_{i}\) are random variables, then:
        \begin{equation*}
            p \left( \sum_{i=1}^{n} X_{i} \le \sum_{i=1}^{n} Y_{i} \right) \ge p \left( \bigcap_{i=1}^{n} \left( X_{i} \le Y_{i} \right) \right).
        \end{equation*}
    \end{lemma}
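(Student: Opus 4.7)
The plan is to prove this as a straightforward event-containment argument rather than invoking any deeper probabilistic machinery. The key observation is pointwise: on any sample point $\omega$ in the intersection event $\bigcap_{i=1}^{n} \{X_i \le Y_i\}$, we have $X_i(\omega) \le Y_i(\omega)$ for every $i$, and summing these $n$ inequalities (a valid operation in $\mathbb{R}$ since the order is preserved under addition) immediately yields $\sum_{i=1}^{n} X_i(\omega) \le \sum_{i=1}^{n} Y_i(\omega)$. So the pointwise implication is essentially a one-line calculation.

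First I would fix the probability space and write the two events explicitly as measurable subsets:
\[
A = \bigcap_{i=1}^{n} \{\omega : X_i(\omega) \le Y_i(\omega)\}, \qquad B = \Big\{\omega : \sum_{i=1}^{n} X_i(\omega) \le \sum_{i=1}^{n} Y_i(\omega)\Big\}.
\]
Then I would argue the containment $A \subseteq B$ by the pointwise summation described above. Finally I would invoke the monotonicity of the probability measure, which gives $p(A) \le p(B)$, i.e.\ exactly the claimed inequality.

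The proof has essentially no obstacle; the only thing worth being careful about is measurability, but since each $X_i$ and $Y_i$ is a random variable, so is $X_i - Y_i$, and hence the sets $\{X_i \le Y_i\}$ and $\{\sum_i (X_i - Y_i) \le 0\}$ are both measurable, so the probabilities are well-defined. Optionally I might note as a short remark that the inequality can be strict: the event $B$ can occur through cancellations even when some individual comparisons $X_i \le Y_i$ fail, which explains why $p(B)$ dominates $p(A)$ rather than equaling it. This lemma is then ready to be combined with Corollary~\ref{corollary:unseen_task_bound} and Lemma~\ref{lmm:unseen_queried_bound} in the proof of Theorem~\ref{thrm:pac_bayes_meta}.
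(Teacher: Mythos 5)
Your proof is correct and matches the paper's argument exactly: both establish the pointwise implication (summing the $n$ inequalities on the intersection event yields the summed inequality) and then invoke monotonicity of the probability measure. Your added remarks on measurability and on why the inequality can be strict are sound but not essential.
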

    
    \begin{proof}
        The proof is quite direct:
        \begin{equation}
            X_{i} \le Y_{i} \implies \sum_{i=1}^{n} X_{i} \le \sum_{i=1}^{n} Y_{i}.
        \end{equation}
        Hence, applying the probability for implication completes the proof.
    \end{proof}
    
    \begin{lemma}
    \label{lmm:inequality_intersection_n_events}
        For \(n\) events \(A_{i}\) with \(i=1:n\), the following holds:
        \begin{equation*}
            p \left( \bigcap_{i=1}^{n} A_{i} \right) \ge \left( \sum_{i=1}^{n} p(A_{i}) \right) - (n - 1), \, \forall n \ge 2.
        \end{equation*}
    \end{lemma}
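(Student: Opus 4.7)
The plan is to prove this via a direct application of the union bound to the complementary events, which gives the cleanest route. First I would pass to complements by writing
\[
p\!\left(\bigcap_{i=1}^{n} A_{i}\right) = 1 - p\!\left(\bigcup_{i=1}^{n} A_{i}^{c}\right),
\]
using De Morgan's law. Then I would apply the countable subadditivity of probability measures (the union bound) to obtain $p\!\left(\bigcup_{i=1}^{n} A_{i}^{c}\right) \le \sum_{i=1}^{n} p(A_{i}^{c})$, and substitute $p(A_{i}^{c}) = 1 - p(A_{i})$ to collect the terms into $n - \sum_{i=1}^{n} p(A_{i})$. Plugging back yields the claim.

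As an alternative in case a self-contained argument without invoking De Morgan is preferred, I would proceed by induction on $n$. The base case $n=2$ follows from inclusion-exclusion: $p(A_{1} \cap A_{2}) = p(A_{1}) + p(A_{2}) - p(A_{1} \cup A_{2}) \ge p(A_{1}) + p(A_{2}) - 1$ since any probability is at most $1$. For the inductive step, I would apply the $n=2$ case to the pair $(B, A_{n+1})$ with $B = \bigcap_{i=1}^{n} A_{i}$ and then use the induction hypothesis on $p(B)$, which gives $p\!\left(\bigcap_{i=1}^{n+1} A_{i}\right) \ge p(B) + p(A_{n+1}) - 1 \ge \sum_{i=1}^{n+1} p(A_{i}) - n$, completing the step.

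There is no real obstacle here: this is the standard Bonferroni inequality, and the only subtlety is ensuring the direction of the inequality (noting that $p(\cdot) \le 1$ is what converts the subadditive bound on the complement union into the stated lower bound on the intersection). I would favour the complement-plus-union-bound route because it is a single line of computation and avoids induction bookkeeping.
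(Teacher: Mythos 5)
Your primary route (De Morgan plus the union bound on the complements) is correct and is genuinely different from the paper's argument, which proceeds by induction on $n$. The paper establishes the base case $n=2$ from inclusion--exclusion and $p(A_1\cup A_2)\le 1$, then applies the same two-event bound to $\bigl(\bigcap_{i=1}^{n}A_i\bigr)\cap A_{n+1}$ and invokes the inductive hypothesis; your alternative induction sketch is essentially identical to this. The complement argument you favour is shorter and avoids the induction bookkeeping entirely: writing $p\!\left(\bigcap_i A_i\right)=1-p\!\left(\bigcup_i A_i^{c}\right)\ge 1-\sum_i p(A_i^{c})=1-\sum_i\bigl(1-p(A_i)\bigr)=\sum_i p(A_i)-(n-1)$ gives the claim in one line, and it also makes transparent that the only probabilistic input is finite subadditivity. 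The induction route the paper takes is marginally more self-contained in that it only uses inclusion--exclusion for two events and $p(\cdot)\le 1$, without explicitly invoking subadditivity, but in substance both arguments are the standard Bonferroni inequality and neither requires more machinery than the other.
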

    
    \begin{proof}
        Proof can be done by induction.
        
        For \(n = 2\):
        \begin{equation*}
            \begin{aligned}[b]
                p(A_{1} \cap A_{2}) = p(A_1) + p(A_2) - p(A_{1} \cup A_{2}) \ge p(A_{1}) + p(A_{2}) - 1.
            \end{aligned}
        \end{equation*}
        
        Suppose that it is true for case \(n\):
        \begin{equation*}
            p \left( \bigcap_{i=1}^{n} A_{i} \right) \ge \left( \sum_{i=1}^{n} p(A_{i}) \right) - (n - 1).
        \end{equation*}
        We prove that this is also true for case \((n + 1)\):
        \begin{equation*}
            \begin{aligned}
                p \left( \bigcap_{i=1}^{n+1} A_{i} \right) & = p \left( \bigcap_{i=1}^{n} A_{i} \right) + p(A_{n+1}) - p \left( \left(\bigcap_{i=1}^{n} A_{i} \right) \bigcup A_{n+1} \right) \\
                & \ge p \left( \bigcap_{i=1}^{n} A_{i} \right) + p(A_{n+1}) - 1 \\
                & \ge \left( \sum_{i=1}^{n} p(A_{i}) \right) - (n - 1) + p(A_{n+1}) - 1 \\
                & \quad\text{(assumption of induction for case \(n\))}\\
                & \ge \left( \sum_{i=1}^{n+1} p(A_{i}) \right) - \left((n + 1) - 1 \right).
            \end{aligned}
        \end{equation*}
        It is, therefore, true for \((n+1)\), and hence, the proof.
    \end{proof}
    
    \begin{lemma}
    \label{lemma:union_bound}
        Let \(X_{i}\) and \(Y_{i}\) are random variables with \(i=1:n\). If \(p(X_{i} \le Y_{i}) \ge 1 - \delta_{i}\) with \(\delta_{i} \in (0, 1]\), then:
        \begin{equation*}
            p \left( \sum_{i=1}^{n} X_{i} \le \sum_{i=1}^{n} Y_{i} \right) \ge 1 - \sum_{i=1}^{n} \delta_{i}.
        \end{equation*}
    \end{lemma}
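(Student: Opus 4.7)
The plan is to chain together the two preceding auxiliary lemmas (Lemma~\ref{lmm:probability_inequality_sum} and Lemma~\ref{lmm:inequality_intersection_n_events}) with the given per-coordinate bound, obtaining the result in three short steps. No fancy machinery or concentration inequality is required; this is essentially a union bound phrased through monotonicity of probability on a sum of random variables.

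First, I would invoke Lemma~\ref{lmm:probability_inequality_sum} to pass from the probability of the summed inequality to the probability of the joint event:
\begin{equation*}
    p\left( \sum_{i=1}^{n} X_{i} \le \sum_{i=1}^{n} Y_{i} \right) \ge p\left( \bigcap_{i=1}^{n} (X_{i} \le Y_{i}) \right).
\end{equation*}
This step exploits the fact that if every summand satisfies $X_i \le Y_i$, then the sums are ordered as well, so the joint event is contained in the event of the summed inequality.

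Second, I would apply Lemma~\ref{lmm:inequality_intersection_n_events} with $A_i = \{X_i \le Y_i\}$ to lower-bound the intersection probability:
\begin{equation*}
    p\left( \bigcap_{i=1}^{n} (X_{i} \le Y_{i}) \right) \ge \sum_{i=1}^{n} p(X_{i} \le Y_{i}) - (n-1).
\end{equation*}
Then I would substitute the hypothesis $p(X_i \le Y_i) \ge 1 - \delta_i$ into the right-hand side, yielding
\begin{equation*}
    \sum_{i=1}^{n} p(X_{i} \le Y_{i}) - (n-1) \ge \sum_{i=1}^{n} (1 - \delta_{i}) - (n-1) = 1 - \sum_{i=1}^{n} \delta_{i},
\end{equation*}
and concatenating the three inequalities delivers the claim.

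There is no real obstacle here; the only subtle point is ensuring the degenerate case $n=1$ is trivial (it collapses to the hypothesis itself) and that the bound is only informative when $\sum_i \delta_i \le 1$, otherwise the right-hand side is non-positive and the statement is vacuously true. The lemma functions as a union bound packaged for the inductive PAC-Bayes construction in the proof of Theorem~\ref{thrm:pac_bayes_meta}, where it is used to aggregate the per-task generalisation guarantees of Lemma~\ref{lmm:unseen_queried_bound} over all $T$ training tasks.
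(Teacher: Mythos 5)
Your proof is correct and takes essentially the same approach as the paper: chain Lemma~\ref{lmm:probability_inequality_sum} to pass to the intersection event, lower-bound it via Lemma~\ref{lmm:inequality_intersection_n_events}, and substitute the per-coordinate hypothesis.
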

    
    \begin{proof}
        Applying Lemmas~\ref{lmm:probability_inequality_sum} and \ref{lmm:inequality_intersection_n_events} for the left-hand side term of this lemma gives:
        \begin{equation}
            \begin{aligned}[b]
                p \left( \sum_{i=1}^{n} X_{i} \le \sum_{i=1}^{n} Y_{i} \right) & \ge p \left( \bigcap_{i=1}^{n} \left( X_{i} \le Y_{i} \right) \right) \quad (\text{Lemma~\ref{lmm:probability_inequality_sum}}) \\
                & \ge \sum_{i=1}^{n} p \left( \left( X_{i} \le Y_{i} \right) \right) - (n - 1) \quad (\text{Lemma~\ref{lmm:inequality_intersection_n_events}}) \\
                & \ge \sum_{i=1}^{n} \left(1 - \delta_{i}\right) - (n - 1) \\
                & = 1 - \sum_{i=1}^{n} \delta_{i}.
            \end{aligned}
        \end{equation}
    \end{proof}
    
    \begin{corollary}
    \label{crll:commutative_bound}
        If \(p(a \le b) \ge 1 - \delta_{1}\) and \(p(b \le c) \ge 1 - \delta_{2}\) with \(\delta_{1}, \delta_{2} \in (0, 1]\), then:
        \begin{equation*}
            p(a \le c) \ge 1 - \delta_{1} - \delta_{2}.
        \end{equation*}
    \end{corollary}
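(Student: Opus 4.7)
The plan is to treat this as a direct specialisation of Lemma~\ref{lemma:union_bound}, or equivalently as a short union-bound argument on the complementary events. Either route is elementary; the whole statement amounts to transitivity of $\le$ combined with a two-event union bound, so no real obstacle is expected.

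The cleanest route I would take is to invoke Lemma~\ref{lemma:union_bound} with $n=2$ and the substitutions $X_{1} = a$, $Y_{1} = b$, $X_{2} = b$, $Y_{2} = c$. The hypotheses $p(X_{1} \le Y_{1}) \ge 1 - \delta_{1}$ and $p(X_{2} \le Y_{2}) \ge 1 - \delta_{2}$ are exactly the given assumptions, so the lemma yields
\begin{equation*}
    p\!\left( X_{1} + X_{2} \le Y_{1} + Y_{2} \right) \ge 1 - \delta_{1} - \delta_{2}.
\end{equation*}
The event $\{X_{1} + X_{2} \le Y_{1} + Y_{2}\} = \{a + b \le b + c\}$ coincides with $\{a \le c\}$ after cancelling $b$, which finishes the proof.

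As a self-contained alternative (in case citing Lemma~\ref{lemma:union_bound} is viewed as overkill), I would argue directly: let $A = \{a \le b\}$ and $B = \{b \le c\}$; by transitivity of $\le$ we have $A \cap B \subseteq \{a \le c\}$, hence $p(a \le c) \ge p(A \cap B)$. Then by De Morgan and the union bound,
\begin{equation*}
    p(A \cap B) = 1 - p(A^{c} \cup B^{c}) \ge 1 - p(A^{c}) - p(B^{c}) \ge 1 - \delta_{1} - \delta_{2},
\end{equation*}
which gives the claim. The only thing to be mildly careful about is the measurability of the events $\{a \le b\}$ and $\{b \le c\}$ as subsets of the underlying sample space, but this is implicit in writing $p(a \le b)$ and $p(b \le c)$ in the first place, so no further justification is needed. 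I expect no real obstacle; the result is essentially a two-line consequence already packaged by Lemma~\ref{lemma:union_bound}.
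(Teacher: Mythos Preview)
Your proposal is correct and matches the paper's intent: the paper states this result as a corollary immediately after Lemma~\ref{lemma:union_bound} without giving a separate proof, so your first route (specialising Lemma~\ref{lemma:union_bound} with $n=2$, $X_{1}=a$, $Y_{1}=b$, $X_{2}=b$, $Y_{2}=c$ and cancelling $b$) is precisely the derivation the paper leaves implicit. Your direct union-bound alternative is also fine and is essentially the $n=2$ case of Lemmas~\ref{lmm:probability_inequality_sum} and~\ref{lmm:inequality_intersection_n_events} combined with transitivity.
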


    \newpage
    \section{Complexity analysis}
\label{sec:complexity}
    In this \sectionautorefname, we analyse the running time complexity of different meta-learning algorithms related to SImPa per one gradient update for the parameter of interest. These methods include:
    \begin{itemize}
        \item point estimate such as MAML~\citep{finn2017model},
        \item probabilistic modelling based on variational inference where the posterior is approximated by a multivariate normal distribution with a diagonal covariance matrix~\citep{ravi2018amortized, nguyen2020uncertainty},
        \item SImPa.
    \end{itemize}
    For simplicity, we assume that the number of samples within each training subset \(S_{i}^{(t)}\) is the same across all tasks: \(m_{0}^{(t)} = m_{i}^{(t)}, \forall i \in \{1, \ldots, T\}\), and so is the validation subset. In addition, as all the methods mentioned are implemented with their first-order versions, the analysis also relies on such assumption. Furthermore, given that such algorithms are implemented with automatic differentiation, the running time complexity of a back-propagation is linear w.r.t. the number of the model's parameters.

    To ease the analysis, we re-state the definition of some notations and define some new ones as shown in \tableautorefname~\ref{tab:notations}.

    \begin{table}[h]
        \centering
        \begin{tabular}{l l}
            \toprule
            \bfseries Notations & \bfseries Description \\
            \midrule
            \(n\) & the number of base model parameters \(= |\mathbf{w}_{i}|\)\\
            \(n^{\prime}\) & the number of the hyper-parameters \( = |\bm{\mu}_{\theta}| = |\theta|\) \\
            \(N_{\mathrm{lower}}\) & the number of gradient updates to minimise lower-level function in \eqref{eq:meta_learning_objective} \\
            \(N_{\phi}\) & the number of gradient updates to learn the task-specific \(\phi\)-net\\
            \(m_{\mathbf{w}}\) & the number of Monte Carlo samples drawn from task-specific posterior \(q(\mathbf{w}_{i}; \lambda_{i})\) \\
            \(m_{\theta}\) & the number of Monte Carlo samples drawn from hyper-meta posterior \(q(\theta; \psi)\) \\
            \(m_{0}^{(t)}\) & the number of samples in the training subset of each task\\
            \(m_{0}^{(v)}\) & the number of samples in the validation subset of each task\\
            \(m_{\phi}\) & the number of samples generated to train the \(\phi\) network in SImPa\\
            \(T\) & the number of tasks within a mini-batch to update the meta-parameter of interest\\
            \bottomrule
        \end{tabular}
        \caption{Notations used in the running time complexity analysis.}
        \label{tab:notations}
    \end{table}

    \subsection{Deterministic point estimate meta-learning (MAML)}
        \subsubsection{Lower-level optimisation for each task}
            The back-propagation of a single gradient update is \(\mathcal{O}(m_{0}^{(t)} n)\). This is then repeated \(N_{\mathrm{lower}}\) times. Hence, the total complexity to adapt to each task is \(\mathcal{O}(N_{\mathrm{lower}} m_{0}^{(t)} n)\).
        \subsubsection{Upper-level optimisation}
            Given the task-specific parameter \(\mathbf{w}_{i}\) obtained in the lower-level of \eqref{eq:meta_learning_objective}, one can calculate the gradient of the validation loss on each task w.r.t. the meta-parameter similarly to back-propagation. The complexity is, therefore, \(\mathcal{O}(m_{0}^{(v)} n)\). And since there are \(T\) tasks in total, the complexity of this step is \(\mathcal{O}(m_{0}^{(v)} T n)\).

        The total complexity of the whole algorithm is: \(\mathcal{O}((N_{\mathrm{lower}} m_{0}^{(t)} + m_{0}^{(v)} T) n)\).

    \subsection{Probabilistic meta-learning with multivariate normal distributions}
        These methods have similar complexity as the deterministic point estimate method, except the association of Monte Carlo sampling to draw \(\mathbf{w}_{i}\) from \(q(\mathbf{w}_{i}; \lambda_{i})\) when evaluating the training and validation losses.
        
        \subsubsection{Lower-level optimisation for each task}
            The optimisation for the lower-level in \eqref{eq:meta_learning_objective} now has 2 steps:
            \begin{itemize}
                \item Sampling for task-specific parameter: \(\mathbf{w}_{i} \sim q(\mathbf{w}_{i}; \lambda_{i})\), resulting in a complexity of \(\mathcal{O}(n)\)
                \item Back-propagation with complexity of \(\mathcal{O}(m_{0}^{(t)} n)\).
            \end{itemize}
            These steps are repeated from \(m_{\mathbf{w}}\) samples, and iterated \(N_{\mathrm{lower}}\), resulting in a total complexity of \(\mathcal{O}(m_{\mathbf{w}} N_{\mathrm{lower}} (m_{0}^{(t)} + 1) n)\).
        \subsubsection{Upper-level optimisation}
            Similarly, the upper-level is also affected by the Monte Carlo sampling of task-specific parameter \(\mathbf{w}_{i}\) from \(q(\mathbf{w}_{i}; \lambda_{i})\). This results in a complexity of \(\mathcal{O} ( m_{\mathbf{w}} T (m_{0}^{(v)} + 1) n )\).

        The total running time complexity is, therefore, \(\mathcal{O}( m_{\mathbf{w}} ( N_{\mathrm{lower}} (m_{0}^{(t)} + 1) + T (m_{0}^{(v)} + 1) ) n )\).

    \subsection{SImPa}
        The forward pass of SImPa is more complicated than the deterministic point estimate and the probabilistic modelling mentioned above. The reason is that the parameter of interest is not the meta-parameter \(\theta\), but the hyper-parameter \(\psi\) or \(\bm{\mu}_{\theta}\), which is one level higher in the hierarchical structure.

        In addition, we assume that the generator is much larger than the base model; \(n^{\prime} \gg n\).


        \subsubsection{Lower-level optimisation}
            The optimisation for the lower-level is carried out in 2 main steps. The first step is to train the \(\phi\) network which consists of:
            \begin{itemize}
                \item Sample \(m_{\phi}\) latent noise vectors \(\{\mathbf{z}_{\kappa}\}_{\kappa = 1}^{m_{\phi}}\): \(\mathcal{O}(m_{\phi} z)\) with \(z\) being assumed to be much less than \(n^{\prime}\)
                \item Forward pass to generate \(\mathbf{w}_{i}\) from the generator represented by the implicit distribution \(q(\mathbf{w}_{i}; \lambda_{i})\): \(\mathcal{O}(m_{\phi} n^{\prime})\)
                \item Sample \(\mathbf{w}_{i}\) from the prior \(p(\mathbf{w})\): \(\mathcal{O}(m_{\phi} n)\)
                \item Back-propagate to train the \(\phi\) network: \(\mathcal{O}(2 m_{\phi} n^{\prime})\)
            \end{itemize}
            This process is repeated \(N_{\phi}\) times, resulting in a total complexity of \(\mathcal{O}(3 N_{\phi} m_{\phi} n^{\prime} )\) to train \(\phi\)-network.

            Given the trained \(\phi\)-network for a particular task, the second step is to adapt the meta-parameter to the task-specific parameter:
            \begin{itemize}
                \item Generate \(\mathbf{w}_{i}\) from the generator: \(\mathcal{O}(n^{\prime})\)
                \item Back-propagation to train the task-specific generator: \(\mathcal{O}(m_{0}^{(t)} n^{\prime})\).
            \end{itemize}
            This is repeated \(m_{\mathbf{w}}\) times, resulting in a complexity of \(\mathcal{O}(m_{\mathbf{w}} (m_{0}^{(t)} + 1) n^{\prime}\).

            And again, this whole process of training both the \(\phi\)-network and the generator is repeated \(N_{\mathrm{lower}}\) times. Thus, the total complexity is \(\mathcal{O}(N_{\mathrm{lower}} (m_{\mathbf{w}} (m_{0}^{(t)} + 1) + 3 N_{\phi} m_{\phi}) n^{\prime})\).

        \subsubsection{Upper-level optimisation}
            Given the task-specific generator obtained in the optimisation of the lower-level in \eqref{eq:meta_learning_objective}, the gradient of the hyper-meta-parameter in the upper-level can be calculated in 2 steps:
            \begin{itemize}
                \item Sample \(\theta\) from \(q(\theta; \psi)\): \(\mathcal{O}(n^{\prime})\)
                \item Backward to calculate the gradient: \(\mathcal{O}(m_{0}^{(v)} n^{\prime})\)
            \end{itemize}
            This is done for \(T\) tasks, resulting in a complexity of \(\mathcal{O}( T (m_{0}^{(v)} + 1) n^{\prime})\) per a single \(\theta\).

        The optimisation in both the lower- and upper-levels is then repeated \(m_{\theta}\) times for \(m_{\theta}\) samples of \(\theta\). Thus, the total complexity of SImPa is: \(\mathcal{O}(m_{\theta} (N_{\mathrm{lower}} (m_{\mathbf{w}} (m_{0}^{(t)} + 1) + 3 N_{\phi} m_{\phi}) + T (m_{0}^{(v)} + 1)) n^{\prime})\).

        To ease the analysis, we specify the running time complexity of the three methods in \tableautorefname~\ref{tab:complexity_full}.

        \begin{table}[hb]
            \centering
            \begin{tabular}{l l l}
                \toprule
                \multirow{2}{*}{\bfseries Method} & \multicolumn{2}{c}{\bfseries Complexity}\\
                \cmidrule{2-3}
                 & General & In practice\tablefootnote{assume the sampling of \(N_{\phi}\) samples is parallel in GPU}\\
                 \midrule
                Deterministic & \(\mathcal{O}((N_{\mathrm{lower}} m_{0}^{(t)} + m_{0}^{(v)} T) n)\) & \(\mathcal{O}(100 n)\)\\
                Probabilistic & \(\mathcal{O}( m_{\mathbf{w}} ( N_{\mathrm{lower}} (m_{0}^{(t)} + 1) + T (m_{0}^{(v)} + 1) ) n )\) & \(\mathcal{O}(440 n )\)\\
                SImPa & \(\mathcal{O}(m_{\theta} (N_{\mathrm{lower}} (m_{\mathbf{w}} (m_{0}^{(t)} + 1) + 3 N_{\phi} m_{\phi}) + T (m_{0}^{(v)} + 1)) n^{\prime})\) & \(\mathcal{O}(125 n^{\prime})\)\\
                \bottomrule
            \end{tabular}
            \caption{Running time complexity per one gradient update of different meta-learning methods.}
            \label{tab:complexity_full}
        \end{table}

        To make it even easier to compare, we also add a \say{practical} setting into \tableautorefname~\ref{tab:complexity_full}. This setting is the one done in our experiments with the following hyper-parameter values:
        \begin{itemize}
            \item \(N_{\mathrm{lower}} = 5\)
            \item \(N_{\phi} = 1\)
            \item \(m_{0}^{(t)} = 5\) and \(m_{0}^{(v)} = 15\)
            \item \(m_{\theta} = m_{\mathbf{w}} = 1\) for SImPa and \(m_{\mathbf{w}} = 4\) for probabilistic methods that are based on multivariate normal distributions with diagonal covariance matrices
            \item \(T = 5\)
            \item \(N_{\phi} = 256\).
        \end{itemize}

        We note that despite the difference of the running time complexity, in the implementation using GPU, some operations, such as matrix multiplication, are implemented efficiently in parallel or vectorisation. Hence, the difference of running time in practice might not be the same as the one analysed in this appendix.

\end{document}